\newtheorem{corollary}{Corollary}
\newtheorem{theorem}{Theorem}
\newtheorem{lemma}{Lemma}
\newcommand{\cF}{\mathcal{F}}
\newcommand{\abs}[1]{\left| #1 \right|}
\newcommand{\bOne}[1]{\mathds{1} \! \left\{#1\right\}}
\newcommand{\EE}[1]{\mathbb{E} \left[#1\right]}
\newcommand{\PP}[1]{\mathbb{P} \left(#1\right)}
\mathchardef\mhyphen="2D
\title{
Optimal Analysis for Bandit Learning in Matching Markets with Serial Dictatorship
}
\author{
  Zilong Wang$^{1}$, Shuai Li$^{*1}$ \\
  $^{1}$Shanghai Jiao Tong University, \text{\{wangzilong,  shuaili8\}@sjtu.edu.cn} 
}
\date{}
\begin{document}
\maketitle
\begingroup
\renewcommand\thefootnote{*}
\footnotetext{Corresponding author: Shuai Li (\texttt{shuaili8@sjtu.edu.cn})}
\endgroup

\begin{abstract}
The problem of two-sided matching markets is well-studied in computer science and economics, owing to its diverse applications across numerous domains. Since market participants are usually uncertain about their preferences in various online matching platforms, an emerging line of research is dedicated to the online setting where one-side participants (players) learn their unknown preferences through multiple rounds of interactions with the other side (arms).
  \citet{sankararaman2021dominate} provide an $\Omega\left( \frac{N\log(T)}{\Delta^2} + \frac{K\log(T)}{\Delta} \right)$ regret lower bound for this problem under serial dictatorship assumption, where $N$ is the number of players, $K (\geq N)$ is the number of arms, $\Delta$ is the minimum reward gap across players and arms, and $T$ is the time horizon. Serial dictatorship assumes arms have the same preferences, which is common in reality when one side participants have a unified evaluation standard.
  Recently, the work of \citet{kong2023player} proposes the ET-GS algorithm and achieves an $O\left( \frac{K\log(T)}{\Delta^2} \right)$ regret upper bound, which is the best upper bound attained so far. Nonetheless, a gap between the lower and upper bounds, ranging from $N$ to $K$, persists. It remains unclear whether the lower bound or the upper bound needs to be improved. In this paper, we propose a  multi-level successive selection algorithm that obtains an $O\left( \frac{N\log(T)}{\Delta^2} + \frac{K\log(T)}{\Delta} \right)$ regret bound when the market satisfies serial dictatorship. To the best of our knowledge, we are the first to propose an algorithm that matches the lower bound in the problem of matching markets with bandits.
\end{abstract}
\section{Introduction}
The two-sided matching market is a classical model and has been widely studied \citep{gale1962college,roth1984evolution,roth1992two,roth2002economist}. It has various applications like school admission and labor markets. There are participants on two sides of the market, and the participant on each side has a preference over the opposite side. Stability is a critical property that characterizes the equilibrium state of the matching. Usually, the stable matching is not unique and how to find it in a given market has been studied for a long time \citep{gale1962college,roth1992two}. In these studies, preferences are assumed to be fixed and known. However, this assumption is often unrealistic in real-world applications, such as online labor markets where employers are uncertain about their preferences for workers and can learn through multiple rounds of iterative matching. Multi-armed bandit (MAB) serves as a crucial tool for making decisions under uncertainty. This framework typically involves a single player and $K$ arms, with each arm possessing a distinct reward distribution unknown to the player. The player's objective is to minimize their cumulative regret, defined as the expectation of the difference in cumulative rewards between the arm with the highest reward and the player's selected arm over $T$ rounds.

The problem of bandit learning in matching markets is first introduced by \citet{das2005two} and has been studied by a rich line of work \citep{liu2020competing,liu2021bandit,basu2021beyond,sankararaman2021dominate,kong2022Thompson,kong2023player,zhang2022matching,wang2022bandit}. Players and arms correspond to the participants on two sides of markets. Each player has an unknown preference, while arms are certain about their preferences over players. For example, workers' preferences are ranked by the job's salary, which is fixed and known. However, employers are uncertain about their preferences since workers' ability is unknown and can be learned through interactions. This problem aims to minimize the stable regret, defined as the reward difference between the stable matched arm and the player's selected arm over $T$ rounds \citep{liu2020competing}. When the stable matching is not unique, there are mainly two types of regret: one is the player-optimal stable regret concerning the player's most preferred stable matching; the other one is the player-pessimal stable regret with respect to the player's least preferred stable matching.

The work of \citet{liu2020competing} first theoretically studies a centralized setting with a platform to assign arms to each player. They propose the explore-then-commit (ETC) algorithm and show the $O\left( \frac{K\log(T)}{\Delta^2} \right)$ player-optimal regret. However, the algorithm requires the knowledge of minimum reward gap $\Delta$, which is hard to know in advance. They also propose the upper confidence bound (UCB) algorithm but can only achieve the player-pessimal stable regret. The following works study the more general decentralized setting \citep{liu2021bandit, sankararaman2021dominate,basu2021beyond,maheshwaridecentralized,kong2022Thompson,zhang2022matching, kong2023player}, where each player takes action based on its observed information independently. \citet{sankararaman2021dominate} show an $\Omega\left( \frac{N\log(T)}{\Delta^2} + \frac{K\log(T)}{\Delta} \right)$ stable regret lower bound. \citet{kong2023player} propose the ET-GS algorithm that can achieve $O\left(\frac{K\log(T)}{\Delta^2}\right)$ regret upper bound. This is the closest algorithm to the lower bound so far. It matches the lower bound with respect to $T$, but there is still a gap from $N$ to $K$. Note that all works mentioned above assume $N\leq K$ since each player needs to have a chance to match one arm in a round, and $N$ might be much smaller than $K$. Thus it is valuable to design an algorithm that achieves the lower bound.

\subsection{Our Contributions}
In this paper, we propose the multi-level successive selection algorithm that attains the $O\left( \frac{N\log(T)}{\Delta^2} + \frac{K\log(T)}{\Delta} \right)$ stable regret upper bound. This result matches the lower bound proposed in \citet{sankararaman2021dominate} under the same condition called serial dictatorship, where all arms share the same preference. This condition guarantees the uniqueness of stable matching, which is also studied by previous works \citep{sankararaman2021dominate,basu2021beyond,maheshwaridecentralized}.

Inspired by the hierarchical structure of players under the serial dictatorship assumption, we propose the decentralized multi-level successive selection algorithm that converges to stable matching from the most preferred player by arms to the least preferred one. The key idea is that the player with rank $i$ takes actions in the arm set except the arms players from $p_1$ to $p_{i-1}$ select. This is reasonable since the lower-ranked player has no chance of winning the competition with the higher-ranked player. Our work serves as an important first step, proving that the upper bound can be improved to match the lower bound in cases where the lower bound has been proved. This work also gives valuable intuition and direction for algorithm design for the general matching market.

\section{Related Work}
The study of bandit learning in matching markets is first introduced by \citet{das2005two}. They study the uncertainty of preferences under the special market structure with the same preferences on both sides. \citet{liu2020competing} first theoretically analyzed the online matching market with the MAB problem. They consider the centralized scenario, in which a platform receives the preferences of both sides and then runs the Gale-Shapley algorithm to output a stable matching. They apply both the ETC algorithm and the UCB algorithm to estimate the ranking. The former requires the knowledge of reward gap $\Delta$ and time horizon $T$, and it achieves the $O\left( {K\log(T)}/{\Delta^2} \right)$ player-optimal regret. They then prove the $O\left( {NK\log(T)}/{\Delta^2} \right)$ player-pessimal regret for the centralized UCB algorithm. Thereafter, \citet{liu2021bandit} and \citet{kong2022Thompson} propose the first UCB and TS-type algorithm for the decentralized market, respectively. They both obtain $O\left( {\exp(N^4) N^5K^2\log^2(T)}/{\Delta^2} \right)$ player-pessimal stable regret. The work of \citet{zhang2022matching} proposes the ML-ETC algorithm and achieves the $O\left({NK\log(T)}/{\Delta^2}\right)$ player-optimal stable regret bound. \citet{kong2023player} proposes the ETGS algorithm and improves the player-optimal stable regret bound to $O\left({K\log(T)}/{\Delta^2}\right)$. \citet{wang2022bandit} study the bandit learning in many-to-one matching markets, where an arm can select more than one player from the other side.

A line of research studies the special market that satisfies some conditions to guarantee unique stable matching. The work of \citet{sankararaman2021dominate} proposes the UCB-D3 algorithm based on the assumption of serial dictatorship, and they obtain an $O\left( {NK\log(T)}/{\Delta^2} \right)$ regret bound. They also derive an $\Omega\left(\max \left\{{N\log(T)}/{\Delta^2}, {K\log(T)}/{\Delta} \right\} \right)$ regret lower bound. The work of \citet{basu2021beyond} assumes the market satisfies $\alpha$-condition, which generalizes the serial dictatorship. They propose the UCB-D4 algorithm and also achieve the $O\left( {NK\log(T)}/{\Delta^2} \right)$ regret bound. \citet{maheshwaridecentralized} study the market satisfying $\alpha$-reducible and proposes a communication-free algorithm. In table \ref{tab:result} we compare our work with other related results. It can be seen that our algorithm is the first that match the lower bound.

There are also some works related to the problem of online matching markets with preference learning. \citet{wang2022bandit, kong2024compatibility} study the many-to-one matching markets where an arm can select more than one player. \citet{jagadeesan2021learning,cen2022regret} study the online matching markets with the monetary transfer. \citet{min2022learn} consider the Markov matching markets, where state transition will occur during the matching process, and the rewards obtained by players depend on the current state. Another line of work focuses on the non-stationary rewards for players in the matching market \citep{li2022rate,ghosh2022decentralized,muthirayan2022competing}. They design robust algorithms to reduce the impact of reward disturbance. There are also many studies on offline matching market learning \citep{dai2021learning,dai2021learningin,su2022optimizing}. They design the optimal matching based on historical data or recommend the participants on both sides. This kind of work does not involve the analysis of cumulative regret.

\begin{table}[htb]
\caption{Comparisons of settings and regret bounds with most related works. $N$ is the number of players, $K\geq N$ is the number of arms, $T$ is the time horizon, $\Delta$ is the minimum reward gap across all players and arms, $\epsilon$ depends on the hyper-parameter of algorithms, $C$ is related to the structure of markets and can grow exponentially in $N$, ``unique'' means that the stable matching is unique.} \label{tab:result}
\resizebox{\textwidth}{!}{\begin{tabular}{@{}lccccccc@{}}
\toprule
     &                                                                                             & \begin{tabular}[c]{@{}c@{}}Regret\\Bound\end{tabular}   & \begin{tabular}[c]{@{}c@{}}Regret\\Type\end{tabular}&\begin{tabular}[c]{@{}c@{}}Assumption\\on Preferences\end{tabular}                                                     \\ \midrule
    & \begin{tabular}[c]{@{}c@{}} \cite{liu2020competing} \end{tabular}       &  \begin{tabular}[c]{@{}c@{}} $O\left( \frac{K\log(T)}{\Delta^2} \right)$  \end{tabular}                                             & \begin{tabular}[c]{@{}c@{}} player-optimal \end{tabular}                                               &  \begin{tabular}[c]{@{}c@{}} general market\end{tabular}                                                \\        
    &        & $O\left( \frac{NK\log(T)}{\Delta^2} \right)$                                              & player-pessimal                                               &  general market                                                    \\   \midrule  
    & \begin{tabular}[c]{@{}c@{}} \cite{liu2021bandit} \end{tabular}       & \begin{tabular}[c]{@{}c@{}} $O\left( \frac{\exp(N^4) N^5K^2\log^2(T)}{\Delta^2} \right)$ \end{tabular}                                               & player-pessimal                                               &  general market                                                    \\ \midrule
     & \begin{tabular}[c]{@{}c@{}} \cite{kong2022Thompson} \\\end{tabular}       & $O\left( \frac{\exp(N^4) N^5K^2\log^2(T)}{\Delta^2} \right)$                                               & player-pessimal                                               &  general market                                                    \\  \midrule
     & \begin{tabular}[c]{@{}c@{}} \cite{sankararaman2021dominate} \\\end{tabular}       & $O\left(\frac{NK\log(T)}{\Delta^2}\right)$                                              & unique                                               &  serial dictatorship                                                    \\ 
      &      & $\Omega\left(\max \left\{\frac{N\log(T)}{\Delta^2}, \frac{K\log(T)}{\Delta} \right\} \right)$                                              & unique                                               &  serial dictatorship                                                    \\  \midrule
      & \begin{tabular}[c]{@{}c@{}} \cite{basu2021beyond} \\\end{tabular}       & $O\left(\frac{NK\log(T)}{\Delta^2}\right)$                                             & unique                                               &  $\alpha$-condition                                                   \\
      &   & $O\left(K\log^{1+\epsilon}(T) + 2^{(\frac{1}{\Delta^2})^{\frac{1}{\epsilon}}}\right)$                                              & player-optimal                                               &  general market                                                    \\ \midrule
      & \begin{tabular}[c]{@{}c@{}} \cite{maheshwaridecentralized} \\\end{tabular}       & $O\left(\frac{CNK\log(T)}{\Delta^2}\right)$                                            & unique                                               &  $\alpha$-reducible                                                    \\ \midrule
      & \begin{tabular}[c]{@{}c@{}} \cite{zhang2022matching} \\\end{tabular}       & $O\left(\frac{K\log(T)}{\Delta^2}\right)$                                            & player-optimal                                               &  general market                                                   \\ \midrule
      & \begin{tabular}[c]{@{}c@{}} \cite{kong2023player} \\\end{tabular}       & $O\left(\frac{K\log(T)}{\Delta^2}\right)$                                          & player-optimal                                               &  general market                                                  \\ \midrule
      & \begin{tabular}[c]{@{}c@{}} \textbf{ours}
      \\\end{tabular}       & $O\left(\frac{N\log(T)}{\Delta^2} + \frac{K\log(T)}{\Delta}\right)$                                              & unique                                               &  serial dictatorship                                                  \\
    \bottomrule
\end{tabular}}
\end{table}

\section{Preliminaries}
In this section, we introduce the model of bandit learning in matching markets. Suppose there are $N$ players and $K$ arms. Denote $\mathcal{N} = \{p_1,p_2,\cdots,p_N\}$ as the set of players and $\mathcal{K} = \{a_1,a_2,\cdots,a_K\}$ as the set of arms. To ensure no player will be unmatched, we assume $N\leq K$, similar to previous work \citep{basu2021beyond,kong2022Thompson,liu2020competing,liu2021bandit,sankararaman2021dominate,wang2022bandit}. Each arm has a fixed preference rank $\pi_{k,i}$ over players. $\pi_{k,i} > \pi_{k,i'}$ means arm $a_k$ prefers player $p_i$ to $p_{i'}$. The preference of player $p_i$ over arm $a_k$ is modeled by the utility $\mu_{i,k} > 0$. $\mu_{i,k}>\mu_{i,k'}$ implies that player $p_i$ prefers arm $a_k$ rather than $a_{k'}$. As in previous work, we assume all preferences are distinct, i.e., $\mu_{i,k}\neq \mu_{i,k'}$, for any $k\neq k'$ \citep{basu2021beyond,kong2022Thompson,liu2020competing,liu2021bandit,sankararaman2021dominate,wang2022bandit}. The preferences of players are uncertain and can be learned through interactive matching iterations.

For each player $p_i$ and arm $a_k\neq a_{k'}$, let $\Delta_{i,k,k'} = \abs{\mu_{i,k} - \mu_{i,k'}}$ be the reward gap between arm $a_k$ and $a_{k'}$ for player $p_i$. Define $\Delta =  \min_{i,k,k'} \Delta_{i,k,k'} > 0$ as the minimum reward gap across all players and arms, which measures the hardness of the learning problem.

\paragraph{Example}
In an online short-term recruitment platform, there are two sides of participants, workers and employers. Workers (arms) are certain about their preferences which are ranked by the payments. Employers (players) are unaware of their preferences over players and this can be stochastic since it could be affected by the exogenous factor.

At each round $t$, every player $p_i$ proposes to an arm $A_i(t) \in \mathcal{K}$. Only the most preferred player will be matched when multiple players select the same arm. Denote $A^{-1}_k(t)$ as the set of players that pull arm $a_k$ at round $t$, then the successfully matched player is $\bar{A}^{-1}_k(t) \in \arg\max_{p_i\in A^{-1}_k(t)} \pi_{k,i}$. Let $\bar{A}_i(t)$ be the matched arm of player $p_i$ at round $t$. Then $\bar{A}_i(t) = A_i(t)$ if player $p_i$ is successfully matched and $\bar{A}_i(t) = \emptyset$ when $p_i$ is rejected. If player $p_i$ is successfully matched, it will receive a random reward $X_i(t)$, which is 1-subgaussian with mean $\mu_{i,\bar{A}_i(t)}$. If player $p_i$ is rejected, it will receive $X_i(t) = 0$. Denote $\bar{A}(t) = \{ (i, \bar{A}_i(t)): i\in [N]\}$ as the matching result at time $t$. Each player $p_i$ can observe the reward $X_i(t)$ and the matching result $\bar{A}(t)$ at time $t$.

Now we introduce the definition of stable matching to define regret formally. Stability is a key property of the matching. A matching $\bar{A}(t)$ is stable if there exists no player-arm pair $(p_i,a_k)$ such that each one prefers each other to its current matched pair, i.e., $\mu_{i,k} > \mu_{i, \bar{A}_i(t)}$ and $\pi_{k,i} > \pi_{k, \bar{A}^{-1}_k(t)}$. A stable matching means that no player or arm is incentive to break the matching. We consider the market satisfies serial dictatorship, where each arm shares the same preference. This condition guarantees unique stable matching \citep{clark2006uniqueness}. Without loss of generality, we assume all arms prefer player from $p_1$ to $p_N$, i.e., $\pi_{k, i}>\pi_{k, i'}, \forall i<i', k\in [K]$. Denote $m^\ast(i)$ as the unique stable matching arm of player $p_i$. The stable regret for each player $p_i$ is defined as the cumulative reward difference between the ideal reward according to the stable matching arm $m^\ast(i)$ and the reward $p_i$ received over $T$ rounds:
\begin{align*}
    R_i(T) = \sum_{t=1}^T \mu_{i,m^\ast(i)} - \EE{\sum_{t=1}^{T}X_i(T)} \,.
\end{align*}

\section{Algorithm}
In this section, We present our multi-level successive selection algorithm (Algorithm \ref{alg:main}). The algorithm is described in Algorithm \ref{alg:main} and can achieve an $O(\frac{N\log(T)}{\Delta^2} + \frac{K\log(T)}{\Delta})$ stable regret.

\begin{algorithm}[htb]
\renewcommand{\algorithmicrequire}{\textbf{Input:}}
\renewcommand\algorithmicensure {\textbf{Output:} }
\caption{Decentralized Multi-Level Successive Selection Framework}
\label{alg:main}
\begin{algorithmic}[1]
\REQUIRE $N,K$.
\STATE Initialize: $\hat{\mu}_{i,k}(0)=0, N_{i,k}(0)=0, \text{UCB}_{i,k}(0)=\infty, \text{LCB}_{i,k}(0)=-\infty$, $\forall i\in[N], k\in [K]$.

\FOR{time $t=1,2,\cdots,N$} \label{alg:index_est_beg}
    \FOR{player $p\in \mathcal{N}$}
        \IF{$p$ has been assigned the index}
            \STATE $p$ pulls $a_2$;
        \ELSE
            \STATE $p$ pulls $a_1$;
        \ENDIF
        \IF{$p$ successfully matches with $a_1$}
            \STATE Assign index $t$ with player $p$.
        \ENDIF
    \ENDFOR
\ENDFOR \label{alg:index_est_end}
\FOR{phase $p=1,2,\cdots$} \label{alg:select_begin}
\STATE $A(i) \leftarrow$\textbf{Communication};
    \FOR{ player $i=1,2,\cdots, N$}
    \STATE Select $A(i)$ for $\log T$ rounds.
    \ENDFOR
\ENDFOR \label{alg:select_end}
\end{algorithmic}
\end{algorithm}

At a high level, if the market satisfies serial dictatorship, then there is a clear hierarchy between players. That is, the player who is most liked by all arms has the highest priority. It does not need to consider the competition with other players when selecting the arm. According to the preference sequence of arms, the level of players decreases in turn. For the player $p_1$, it is actually facing the single-player MAB problem. $p_1$ will run an efficient single-player MAB algorithm to select the best arm, such as the elimination algorithm (Algorithm \ref{alg:eli}) or UCB algorithm (Algorithm \ref{alg:UCB}). For the player $p_2$, avoiding conflicts with $p_1$ when designing the algorithm is necessary.
We can delete the arm pulled by $p_1$ from the plausible arm set of player $p_2$, and player $p_2$ then runs the single-player MAB algorithm in the remaining plausible arm set. Similarly, for the player $p_i$, the single-player MAB algorithm will be carried out outside the arms selected by the player $p_1$, $p_2$, ..., $p_{i-1}$.

Here we introduce the algorithm's process in detail. The first phase proceeds in the first $N$ rounds to assign a distinct index to each player (Line \ref{alg:index_est_beg} - \ref{alg:index_est_end}). At the round $t=1$, all players propose to the arm $a_1$. Moreover, the matched player is assigned with the index $1$. At the round $t=2$, all players except $p_1$ propose to arm $a_1$, and player $p_1$ proposes to arm $a_2$. The player successfully matched with $a_1$ is assigned with index $2$. Similarly, at round $t$, all players that have not been assigned index propose to arm $a_1$, while others propose to arm $a_2$. The player matched with $a_1$ successfully is assigned with index $t$.

When each player has been assigned an index, all players proceed to run an arm selection algorithm (Line \ref{alg:select_begin} - \ref{alg:select_end}). Once player $p_i$ selects arm $A_{i}(t)$ at time $t$ and gets observation $X_i(t)$, he will update the estimated value $\mu_{i,A_i(t)}(t)$ and $N_{i,A_{i}(t)}(t)$ for arm $A_{i}(t)$ as
\begin{align}
    \hat{\mu}_{i,A_{i}(t)}(t) = \frac{ \hat{\mu}_{i,A_{i}(t)}(t-1) \cdot N_{i,A_{i}(t)}(t-1) + X_{i}(t)}{ N_{i,A_{i}(t)}(t-1) + 1}, N_{i,A_{i}(t)}(t) = N_{i,A_{i}(t)}(t-1) + 1 \,.
\end{align}
And $p_i$ lets $\hat{\mu}_{i,k}(t)=\hat{\mu}_{i,k}(t-1)$ and $N_{i,k}(t) = N_{i,k}(t-1)$ for other arms $k\neq A_{i}(t)$. Player $p_i$ then constructs a confidence set for the estimated reward based on historical observation. Specifically, $p_i$ will construct the upper confidence bound UCB index and lower confidence bound LCB index for arm $a_k$ as
\begin{align}
    \text{UCB}_{i,k}(t) = \hat{\mu}_{i,k}(t) + 2\sqrt{\frac{2\log T}{N_{i,k}(t)}} ,
    \text{LCB}_{i,k}(t) = \hat{\mu}_{i,k}(t) - 2\sqrt{\frac{2\log T}{N_{i,k}(t)}} \,.
\end{align}
When $N_{i,k}(t)=0$, we let $\text{UCB}_{i,k}(t) = \infty$ and $\text{LCB}_{i,k}(t) = -\infty$. If there exists two arms $a_k$ and $a_{k'}$ such that $\text{LCB}_{i,k}(t) > \text{UCB}_{i,k'}(t)$, then with high probability $\mu_{i,k} > \mu_{i,k'}$, and $p_i$ can believe that arm $a_k$ is better than $a_{k'}$ with high probability.

Since player $p_1$ ranks highest among all players, it can run the elimination or UCB algorithms for all arms. 

If $p_1$ decides to run the elimination algorithm, it will propose to all arms in a round-robin way to collect enough observation. If for an arm $a_k$ there exists another arm $a_{k'}$ such that $\text{LCB}_{1,k'}(t)>\text{UCB}_{1,k}(t)$ at time $t$, then $p_1$ believes that $a_k$ is not the best arm and thus $p_1$ will eliminate $a_k$ from the plausible set $\mathcal{P}_1$. $p_1$ then proposes to all arms except $a_k$ in a round-robin way. If $p_1$ decides to run the UCB algorithm, it will propose to the arm $A_1(t)\in \arg\max_{k\in \mathcal{P}_1} \text{UCB}_{i.k}(t-1)$ at each time $t$. After each time $t$, $p_1$ will update $\hat{\mu}_{1,k}(t), N_{1,k}(t), \text{LCB}_{1,k}(t)$ and $\text{UCB}_{1,k}(t)$ for any arm $a_k$. Intuitively, from the serial dictatorship, we know that when $p_1$ has identified the best arm, that arm is the stable matched arm for $p_1$.


For player $p_2$, since all arms prefer $p_1$ to $p_2$ and prefer $p_2$ to any other players, thus it is natural for $p_2$ to run the algorithm among arms except the arm $p_1$ will propose to. At each time $t$, $p_2$ will first delete the arm $A_1(t)$ from the plausible set $\mathcal{P}_2$ and then run the single-player algorithm like $p_1$. Intuitively, when $p_1$ has identified its stable matched arm, $p_2$ will find the best arm in the remaining arm set, which is the stable matched arm of player $p_2$.

The algorithm then proceeds similarly for player $p_i$, $\forall i \in [N]$. $p_i$ will first delete arms $A_{1}(t)$, $A_{2}(t)$, $\cdots, A_{i-1}(t)$ from the $\mathcal{P}_i$, and then runs elimination algorithm or UCB algorithm in $\mathcal{P}_i$. When higher-ranked players all select their stable arm, we can conclude that the best arm in $\mathcal{P}_i$ is the stable matched arm for $p_i$. Thus player $p_i$ running single player MAB algorithm in $\mathcal{P}_i$ can find the stable arm eventually.

\textbf{Description for Decentralized Algorithm}
Since in the real world, there would not be a central platform that assigns matching to each player at every time, it is necessary to design a decentralized algorithm where each player selects an arm only based on her history observations. In this section, we describe our decentralized communication framework that helps players complete the multi-level successive selection successfully.

At each $\log T$ round, players would enter the communication process to update their next chosen arm. In this communication block, each player acts as the sender sequentially from $p_1$ to $p_N$, while other players act as receivers. At first, player $p_1$ will first update its chosen arm for the next $\log T$ rounds based on the arm selection procedure (elimination or UCB), then player $p_1$ will check if its chosen arm has changed. If changed, it will select all $K$ arms to cause collisions and thus all receivers will know that they need to start receiving the updated information from the sender. The flag will be set as true, which means the next senders will all send their updated information to followers. If $p_1$ has not changed its chosen arm, it will keep selecting $A_1$ until the communication block ends. Thus other players will not receive information from $p_1$. This ensures no additional regret will occur in the communication process if $p_1$ does not change its chosen arm.

\begin{figure}[th!] 
\centering
\includegraphics[width=1\linewidth]{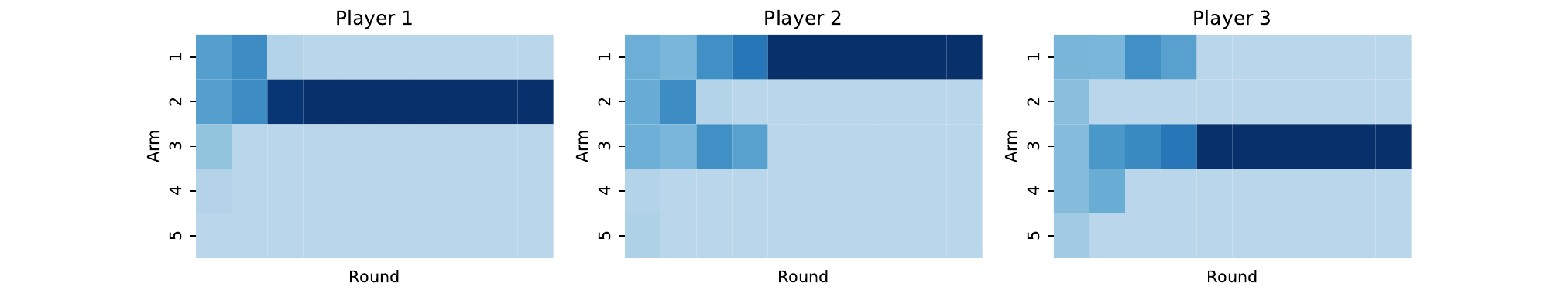}
 \caption{A case of $3$ players and $5$ arms with reward chosen i.i.d. uniform in $[0,1]$. Over $10$ independent runs with horizon $T=100,000$. This heat map counts the number of times the player pulls each arm in every $1,000$ rounds. The color intensity increases with the number of times the arm is selected.}
  \label{fig:1}
\end{figure}

The communication block follows similarly when $p_1$ has sent its message. $p_2$ starts acting as the sender while $p_3$ to $p_N$ acts as the receiver. $p_2$ first updates its chosen arm, and then if the flag has turned true, $p_2$ will directly send its chosen arm to receivers. If the flag is false but $p_2$ has changed its chosen arm, $p_2$ will select $K$ arms sequentially to mention other players, and change the flag to false. Then $p_2$ sends a message to all receivers. This process continues for players $p_3$ to $p_N$, acting as senders one by one.

When $p_i$ acts as sender, $p_{i+1}$ to $p_N$ act receivers. If the flag is true, receivers will directly receive the message from $p_i$, according to the order $i+1,\cdots N$. If the flag is false, they will keep selecting their chosen arm in the last $\log T$ rounds, and they will change the flag and receive a message if any collision occurs.

When the sender wants to send its chosen arm $k$ to the receiver, it will spend $\log K$ rounds to send the binary of $k$. Specifically, it will send the binary of $k$ through collision. The receiver will keep selecting the predefined arm $\log K$ rounds, while the sender selects that arm at $s$ round only when the $s$-bit of $k$ is $1$. Thus the receiver is able to decode the message and update its plausible set.

Figure \ref{fig:1} shows an introductory example of $3$ players and $5$ arms. It can be seen that player $p_1$ first converges to stable matching since it has the highest priority. Then it follows with $p_2$ and $p_3$. This demonstrates the hierarchy process of the algorithm, which motivates the following inductive analysis.

\begin{algorithm}

\renewcommand{\algorithmicrequire}{\textbf{Input:}}

\renewcommand\algorithmicensure {\textbf{Output:} }

\caption{Communication (for player $p_i$)}

\label{alg:communicate}

\begin{algorithmic}[1]
\REQUIRE $N,K$.
\STATE Flag = False, $\mathcal{P}_i = \mathcal{K}$
\FOR{$n=1,2,\cdots, N$}
    \IF{Flag = True}
        \IF{$i < n$}
            \STATE Select $A(i)$;
        \ENDIF
        \IF{$i = n$}
            \STATE $A(i)\leftarrow$ \textbf{Arm Selection Subroutine}($\mathcal{P}_i, N_{i,k}, \text{UCB}_{i,k}, \text{LCB}_{i,k}, \forall k\in[K]$);
            \FOR{$i' = i+1, \cdots, N$} \label{alg:send}
                \STATE \textbf{Send} ($A(i)$, $i'$, $i'$-th arm in $\mathcal{P}_i$);
            \ENDFOR
        \ENDIF
        \IF{$i>n$}
            \STATE \textbf{Receive} ($A(n)$, $n$, $i$-th arm in $\mathcal{P}_i$); \label{alg:receive}
            \STATE $\mathcal{P}_i \leftarrow \mathcal{P}_i \backslash \{ A(n) \}$;
        \ENDIF
    \ENDIF
    \IF{Flag = False}
        \IF{$i < n$}
            \STATE Select $A(i)$;
        \ENDIF
        \IF{$i = n$}
            \STATE $A(i)\leftarrow$ \textbf{Arm Selection Subroutine}($\mathcal{P}_i, N_{i,k}, \text{UCB}_{i,k}, \text{LCB}_{i,k}, \forall k\in[K]$);
            \IF{$A(i)$ has changed}
                \FOR{$k=1,\cdots K$}
                    \STATE Select arm $a_k$;
                \ENDFOR
                \STATE Move to Line \ref{alg:send};
            \ELSE
            \STATE Select $A(i)$;
            \ENDIF
        \ENDIF
        \IF{$i>n$}
            \STATE Select $A(i)$;
            \IF{$A(i)$ gets collide}
                \STATE Flag = True;
                \STATE Move to Line \ref{alg:receive};
            \ENDIF
        \ENDIF
    \ENDIF
\ENDFOR

\end{algorithmic}
\end{algorithm}











\begin{algorithm}[htb] 

\renewcommand{\algorithmicrequire}{\textbf{Input:}}

\renewcommand\algorithmicensure {\textbf{Output:}}

\caption{Elimination Subroutine}

\label{alg:eli}

\begin{algorithmic}[1]

\REQUIRE $\mathcal{P}_i, N_{i,k}(t-1), \text{UCB}_{i,k}(t-1), \text{LCB}_{i,k}(t-1), \forall k\in[K]$.

\FOR{arm $k \in \mathcal{P}_i$}
    \IF{$\exists k'\in\mathcal{P}_i, \text{LCB}_{i,k'}(t-1)> \text{UCB}_{i,k}(t-1)$}
        \STATE $\mathcal{P}_i\leftarrow \mathcal{P}_i\backslash \{k\}$;
    \ENDIF
\ENDFOR
        \STATE Select $A_i(t) \in \arg\min_{k\in\mathcal{P}_i} N_{i,k}(t-1)$;
\ENSURE $A_i(t)$.
\end{algorithmic}
\end{algorithm}

\begin{algorithm}[htb] 

\renewcommand{\algorithmicrequire}{\textbf{Input:}}

\renewcommand\algorithmicensure {\textbf{Output:}}

\caption{UCB Subroutine}

\label{alg:UCB}

\begin{algorithmic}[1]

\REQUIRE $\mathcal{P}_i,  \text{UCB}_{i,k}(t-1), \forall k\in[K]$.

        \STATE Select $A_i(t) \in \arg\max_{k\in\mathcal{P}_i} \text{UCB}_{i,k}(t-1)$.
\ENSURE $A_i(t)$.
\end{algorithmic}
\end{algorithm}

\begin{algorithm}[htb] 
\renewcommand{\algorithmicrequire}{\textbf{Input:}}
\renewcommand\algorithmicensure {\textbf{Output:}}
\caption{Send Message}
\label{alg:sendmessage}
\begin{algorithmic}[1]
\REQUIRE message $m$, receiver $i$, arm $k$.
\STATE Encode $m$ to binary.
\FOR{$\ell = 1, \cdots, {\log K}$}
    \IF{$\ell$-th bit of $m$ is $0$}
        \STATE Select arm $k$;
    \ELSE
    \STATE Skip selection;
    \ENDIF
\ENDFOR

\end{algorithmic}
\end{algorithm}

\begin{algorithm}[htb] 
\renewcommand{\algorithmicrequire}{\textbf{Input:}}
\renewcommand\algorithmicensure {\textbf{Output:}}
\caption{Receive Message}
\label{alg:receivemessage}
\begin{algorithmic}[1]
\REQUIRE Arm $k$.
\STATE Message $m=0$
\FOR{$\ell = 1, \cdots, {\log K}$}
    \STATE Select arm $k$;
    \IF{gets collide}
        \STATE $\ell$-th bit of $m$ $\leftarrow$ $0$;
    \ELSE
        \STATE $\ell$-th bit of $m$ $\leftarrow$ $1$;
    \ENDIF
\ENDFOR
\ENSURE $m$.

\end{algorithmic}
\end{algorithm}

\section{Theoretical Analysis}
In this section, we theoretically analyze our multi-level successive selection algorithm (Algorithm \ref{alg:main}). Before giving the regret guarantee for the algorithm, we first introduce some useful notations. 

 For each player $p_i$, denote $\sigma_i$ as the permutation of arms that represents the preference ranking of $p_i$, i.e., $\mu_{i,\sigma_i(k)} > \mu_{i,\sigma_i(k')}$ for any $k < k'$. Let its player optimal stable matched arm is $\sigma_i(\ell_i)$, $\ell_i\leq N$, which means the stable matched arm of $p_i$ ranks $\ell_i$ in its preference list.

Denote $N_{i,k}(t)$ as the number of successful matchings with $(p_i, a_k)$ up to time $t$. Define $\mathcal{F}=\{\exists i\in[N], \exists k \in [K], \abs{\hat{\mu}_{i,k}(t)-\mu_{i,k}} > \sqrt{\frac{2\log(T)}{N_{i,k}(t)}} \}$ as the bad event that some preferences are not estimated well at time $t$.

We now present the upper bound for the stable regret for each player $p_i \in \mathcal{N}$ by following the multi-level successive selection algorithm with elimination subroutine (MLSE).
\begin{theorem} \label{theorem: 1}
    Following the multi-level successive selection algorithm (Algorithm \ref{alg:main}) with elimination subroutine, the stable regret of player $p_i \in \mathcal{N}$ satisfies
    \begin{align*}
        R_{i}(T) &\leq \frac{32i \log(T)}{\Delta^2} + \sum_{k=\ell_i+1}^{K}  \frac{32\log(T)}{\Delta_{i,\sigma_i(\ell_i),\sigma_i(k)}}  + 2NK + N +  \frac{32 i (N \log K + K)}{\Delta^2}\\
        &\leq   \frac{32i \log(T)}{\Delta^2}+\frac{32(K-\ell_i) \log(T)}{\Delta}+2NK + N +  \frac{32 i (N \log K + K)}{\Delta^2}\,.
    \end{align*}
\end{theorem}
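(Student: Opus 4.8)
The plan is to condition on a high-probability ``clean'' event $\cF^{c}$ and then run an induction on the player index. For the probability bound, a subgaussian tail bound applied to each empirical mean $\hat\mu_{i,k}(t)$ shows that, for each fixed value of the counter $N_{i,k}$, the deviation exceeds the confidence radius with probability $O(1/T^{2})$; a union bound over the $N$ players, $K$ arms and the at most $T$ values the counters can take gives $\PP{\cF}=O(NK/T)$. On $\cF$ I bound the per-round stable regret of each player crudely, which together with the $N$ rounds of the initial index-assignment phase accounts for the additive $2NK+N$. Everything below is argued on $\cF^{c}$, where $\abs{\hat\mu_{i,k}(t)-\mu_{i,k}}\le\sqrt{2\log T/N_{i,k}(t)}$; the two consequences I use repeatedly are the standard elimination guarantees: (i) an arm is never removed from $\cP_{i}$ by an arm that is worse for $p_{i}$ (so the best currently available arm is never eliminated), and (ii) once a worse arm $a_{k}$ has been successfully matched with $p_{i}$ at least $\tfrac{32\log T}{\Delta_{i,\sigma_i(\ell_i),\sigma_i(k)}^{2}}$ times, it is eliminated from $\cP_{i}$.

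The structural input from serial dictatorship is that the unique stable matching coincides with the outcome of running the players through the common preference order, so $m^{\ast}(i)$ is $p_{i}$'s favourite arm outside $\{m^{\ast}(1),\dots,m^{\ast}(i-1)\}$; equivalently, the $\ell_{i}-1$ arms that $p_{i}$ strictly prefers to $m^{\ast}(i)$ are exactly $\{\sigma_i(1),\dots,\sigma_i(\ell_i-1)\}$ and all lie in $\{m^{\ast}(1),\dots,m^{\ast}(i-1)\}$; in particular $\ell_{i}\le i$. The induction hypothesis is: on $\cF^{c}$ there is a phase $\beta_{i}$ after which $p_{i}$ is matched with $m^{\ast}(i)$ in every subsequent phase, and the regret accrued by $p_{i}$ up to $\beta_{i}$ obeys the stated bound. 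For the base case $i=1$, $p_{1}$ runs single-player elimination over all $K$ arms; since one selection phase adds $\log T$ to a single counter, the worse arm $\sigma_1(k)$ is played in at most $\lceil 32/\Delta_{1,\sigma_1(1),\sigma_1(k)}^{2}\rceil$ phases, each of regret $\log T\cdot\Delta_{1,\sigma_1(1),\sigma_1(k)}$, summing to $\sum_{k\ge2}\tfrac{32\log T}{\Delta_{1,\sigma_1(1),\sigma_1(k)}}$, with the extra $\tfrac{32\log T}{\Delta^{2}}$ absorbing the roundings. For the inductive step, once $p_{1},\dots,p_{i-1}$ have locked (after phase $\beta_{i-1}$) the communication protocol has removed $m^{\ast}(1),\dots,m^{\ast}(i-1)$ from $\cP_{i}$, so $p_{i}$ now faces single-player elimination on $\cK\setminus\{m^{\ast}(1),\dots,m^{\ast}(i-1)\}$, whose best arm is precisely $m^{\ast}(i)$, and the base-case argument bounds the further exploration.

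It remains to convert this into the stated regret by partitioning $p_{i}$'s suboptimal rounds. Rounds where $p_{i}$ proposes to $\sigma_i(k)$ with $k>\ell_{i}$ are charged to guarantee (ii) exactly as in the base case and contribute $\sum_{k=\ell_i+1}^{K}\tfrac{32\log T}{\Delta_{i,\sigma_i(\ell_i),\sigma_i(k)}}$. Rounds where $p_{i}$ proposes to one of the $\ell_{i}-1$ top arms $\sigma_i(1),\dots,\sigma_i(\ell_i-1)$ (the stable arms of lower-indexed players) can occur only before the owner of that arm has locked, after which the arm leaves $\cP_{i}$; each such arm is therefore explored at most $\tfrac{32\log T}{\Delta^{2}}$ rounds and each such round costs at most $\mu_{i,\sigma_i(\ell_i)}\le1$ in regret, for a total of at most $\tfrac{32\ell_{i}\log T}{\Delta^{2}}\le\tfrac{32 i\log T}{\Delta^{2}}$. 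Finally, each time one of the $i$ highest-priority players changes its committed arm the decentralized protocol spends $O(K)$ rounds broadcasting a collision and $O(N\log K)$ rounds relaying the new index, and counting these changes yields the $\tfrac{32 i(N\log K+K)}{\Delta^{2}}$ overhead; adding the three groups and the $2NK+N$ gives the first displayed bound, and $\Delta_{i,\sigma_i(\ell_i),\sigma_i(k)}\ge\Delta$ over the $K-\ell_{i}$ terms gives the second. The main obstacle is precisely the bookkeeping hidden in the inductive step: one must show that while $p_{1},\dots,p_{i-1}$ are still converging, the rounds in which $p_{i}$ is rejected (collides with a higher-priority player) or re-explores an arm it will ultimately lose do not compound across the cascade of convergence times but stay inside the $\tfrac{32 i\log T}{\Delta^{2}}$ budget; this is where serial dictatorship is essential, since the arms $m^{\ast}(1),\dots,m^{\ast}(i-1)$ leave $\cP_{i}$ one at a time, in order, as their owners lock, so the blocking of $p_{i}$ by $p_{j}$ can be confined to $m^{\ast}(j)$ and the contributions telescope over $j<i$ rather than multiplying.
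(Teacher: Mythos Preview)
Your high-probability event, the $2NK+N$ bookkeeping, and the communication overhead all match the paper. The gap is in how you partition the non-communication, clean-event rounds.

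You split according to whether $p_i$ proposes to an arm ranked above or below $\sigma_i(\ell_i)$. But rounds in which $p_i$ is \emph{matched} with some $\sigma_i(k)$, $k<\ell_i$, carry \emph{negative} instantaneous regret (in the algorithm $p_i$ never collides, since $A(1),\dots,A(i-1)$ are stripped from $\cP_i$ before selection), so that category needs no bound at all; the effort you spend capping it by $\tfrac{32\ell_i\log T}{\Delta^2}$ is misplaced, and the claim that each such arm is explored at most $32\log T/\Delta^2$ rounds is also unjustified, since those arms are never eliminated by $p_i$. Conversely, your bound on the $k>\ell_i$ category via guarantee (ii) silently assumes that $\sigma_i(\ell_i)$ is in $\cP_i$ whenever $p_i$ plays $\sigma_i(k)$. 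That fails precisely in the phases where some higher player $p_{i'}$ is sitting on $\sigma_i(\ell_i)$ (and the remaining higher players cover $\sigma_i(1),\dots,\sigma_i(\ell_i-1)$): in those phases $\cP_i$ contains no arm at least as good as $m^\ast(i)$, the elimination comparison cannot fire against $\sigma_i(\ell_i)$, and $p_i$ plays a strictly worse arm for reasons unrelated to its own exploration budget. These rounds are exactly the source of the $\tfrac{32\,i\log T}{\Delta^2}$ term, and you have not accounted for them.

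The paper handles this with the event $\cE_{i,\ell_i+1}(t)=\{\forall k'\le \ell_i,\ \exists i'<i:\ \bar A_{i'}(t)=\sigma_i(k')\}$ and splits the $k>\ell_i$ pulls by $\cE$ versus $\urcorner\cE$. Under $\urcorner\cE$ some top arm is present in $\cP_i$, so the elimination bound gives $\sum_{k>\ell_i}32\log T/\Delta_{i,\sigma_i(\ell_i),\sigma_i(k)}$. For $\cE$, the key lemma shows $\sum_t\mathbb{P}(\cE_{i,\ell_i+1}(t))\le \sum_{i'<i}\sum_{i''=i'+1}^{i}32\log T/\Delta_{i',m^\ast(i'),m^\ast(i'')}^2$ by observing that whenever $\cE_{i,\ell_i+1}$ holds, \emph{some} $p_{i'}$ is on $m^\ast(i)$, and tracing that back recursively to an ordinary exploration pull of $p_{i'}$ against one of the lower stable arms; since the $m^\ast(i'')$ are distinct, the gaps $\Delta_{i',m^\ast(i'),m^\ast(i'')}$ dominate $\Delta,2\Delta,\dots$ and the inner sum collapses to $O(1/\Delta^2)$. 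Your last paragraph gestures at this but gets the direction wrong: the blocking that matters is $p_j$ occupying $m^\ast(i)$, not $m^\ast(j)$, so the ``confine blocking of $p_i$ by $p_j$ to $m^\ast(j)$'' picture does not yield the needed telescoping.
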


At a high level, the regret $R_i(T)$ is upper bounded by five terms. The first and second terms are due to the selections of sub-optimal arms. Specifically, the second term is the regret induced when stable arm $\sigma_i(\ell_i)$ is in the plausible set, and the first term is induced when $\sigma_i(\ell_i)$ and other higher-ranked arms are deleted in the arm set, i.e., $a_{\sigma_i(1)}$ to $a_{\sigma_i(\ell_i)}$ are pulled by the higher ranked player. Intuitively, if the stable matched arm and other higher-ranked arms are all pulled by higher-ranked players, then $p_i$ will pull the sub-optimal arm even if it has been detected as sub-optimal. The third constant term arises from the bad concentration events $\mathcal{F}_t$. The fourth term is bounded by the regret induced by the rank estimation process. It runs for exact $N$ rounds and thus incurs at most $N$ regret. The last term is caused by communication in the decentralized algorithm.

The primary technical challenge stems from determining how to bound the number of times higher-ranked players pull the stable matched arm of $p_i$, i.e., $\sigma_i(\ell_i)$. Note that $\sigma_i(\ell_{i})$ is the sub-optimal arm for those higher ranked players compared with their stable arm, otherwise $\sigma_i(\ell_i)$ would not be the stable matched arm of $p_i$. Then it is left to bound the number of times $p_1$ to $p_{i-1}$ pulls $\sigma_i(\ell_i)$. For any player $p_{i'}$ with $i'<i$, the number of times $p_{i'}$ pulls $p_i$ is affected by the number of times $p_1$ to $p_{i'-1}$ pull the stable matched arm of $p_{i'}$. This leads to a recursion form of bound.

The following example (Figure \ref{fig:challenge}) shows a case of multi-level blocking. Player $p_3$ may fail to select its stable matched arm $a_{\sigma_3(\ell_3)}$ because $p_1$ or $p_2$ selects $a_{\sigma_3(\ell_3)}$. And $p_2$ may be forced to select $a_{\sigma_3(\ell_3)}$ since $a_{\sigma_2(\ell_2)}$ is selected by $p_1$.
   \begin{figure}[htb]
       \centering
       \includegraphics[width=5.5cm]{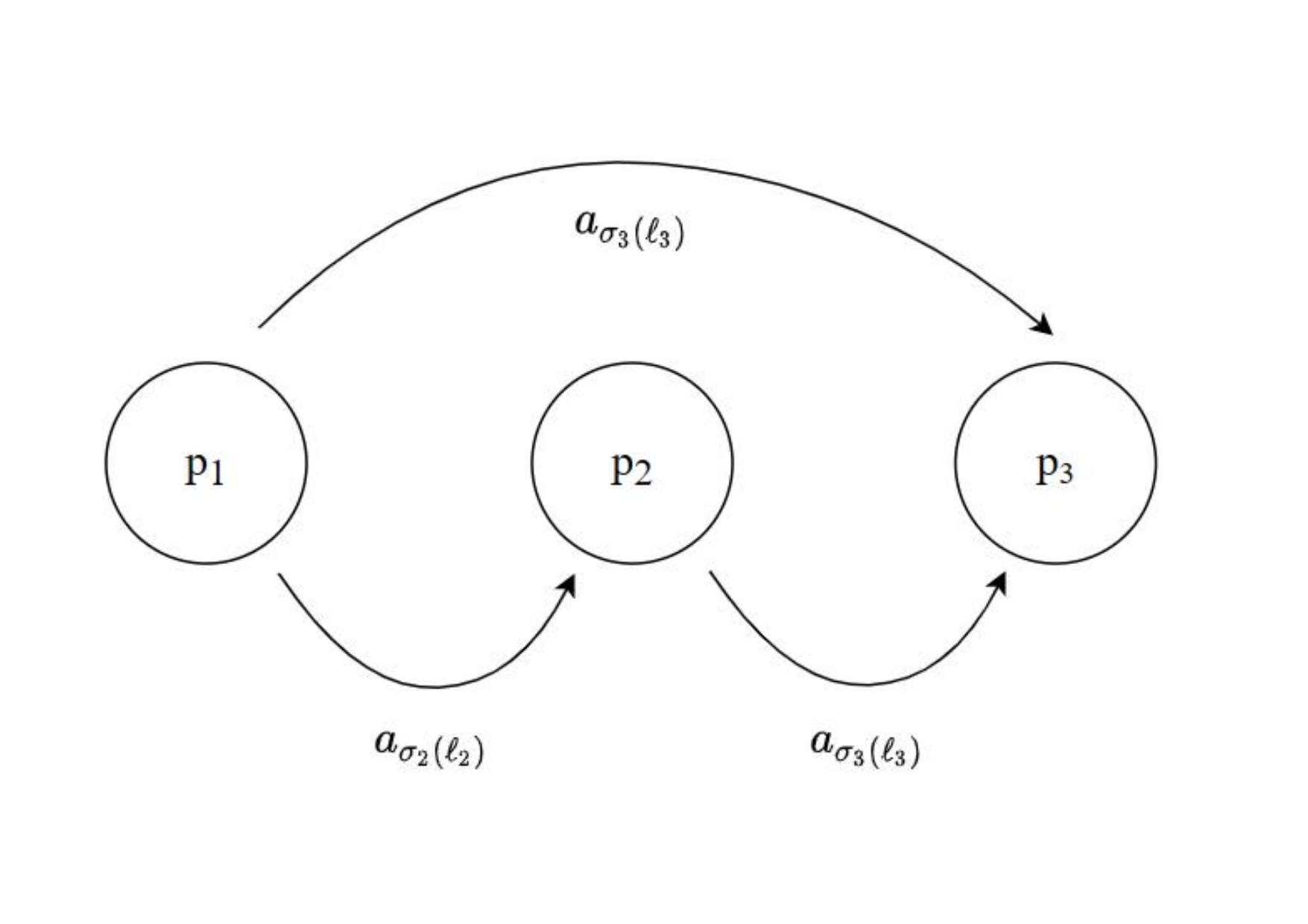}
       \caption{multi-level blocking case}
       \label{fig:challenge}
   \end{figure}

To handle this recursion form, we carefully analyze the process of our algorithm (Algorithm \ref{alg:main}).
A key observation is that when $p_{i'}$ has to select $\sigma_i(\ell_i)$ due to $\sigma_{i'}(\ell_{i'})$ and higher ranked arms are occupied by higher players. This event can date back to the normal exploration of some player $p_{i''}$ with $i''<i'$. Thus the term can be bounded by the summation of exploration times for higher-ranked players.
This helps us prove the bound by induction, which guarantees an $O\left( \frac{(i-1)\log(T)}{\Delta^2} \right) $ bound.

\subsection{Proof of Theorem \ref{theorem: 1}}
Now we give a complete proof of Theorem \ref{theorem: 1}. The analysis for the multi-level selection with UCB sub-routine (ML-UCB) can be found in \ref{lem:ucb} .

The following lemma shows the correctness of the elimination algorithm. 
\begin{lemma} \label{lem:1}
For any player $p_i$ and arm $a_k, a_{k'}$. Conditioned on $\urcorner \mathcal{F}$, $\text{UCB}_{i,k} < \text{LCB}_{i,k'}$ implies $\mu_{i,k} < \mu_{i,k'}$.
\end{lemma}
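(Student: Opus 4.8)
The plan is to establish the two-sided sandwich inequality $\text{LCB}_{i,k}(t)\le \mu_{i,k}\le \text{UCB}_{i,k}(t)$ on the good event $\urcorner\mathcal{F}$, for every player $p_i$, every arm $a_k$, and the round $t$ at which the indices are evaluated, and then simply chain it with the hypothesis. Before doing so I would dispose of the degenerate cases caused by the $\pm\infty$ conventions: if $N_{i,k}(t)=0$ then $\text{UCB}_{i,k}(t)=+\infty$, so $\text{UCB}_{i,k}(t)<\text{LCB}_{i,k'}(t)$ cannot hold; and if $N_{i,k'}(t)=0$ then $\text{LCB}_{i,k'}(t)=-\infty$, so again the hypothesis fails. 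Hence we may assume $N_{i,k}(t)\ge 1$ and $N_{i,k'}(t)\ge 1$, and all of $\hat\mu_{i,k}(t),\hat\mu_{i,k'}(t),\text{UCB}_{i,k}(t),\text{LCB}_{i,k'}(t)$ are finite.

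On $\urcorner\mathcal{F}$, the definition of the failure event gives $|\hat\mu_{i,k}(t)-\mu_{i,k}|\le \sqrt{2\log(T)/N_{i,k}(t)}$ and likewise for the pair $(i,k')$. Since $\sqrt{2\log(T)/N_{i,k}(t)}\le 2\sqrt{2\log(T)/N_{i,k}(t)}$, this yields
\[
\mu_{i,k}\ \le\ \hat\mu_{i,k}(t)+2\sqrt{\tfrac{2\log(T)}{N_{i,k}(t)}}\ =\ \text{UCB}_{i,k}(t),\qquad
\mu_{i,k'}\ \ge\ \hat\mu_{i,k'}(t)-2\sqrt{\tfrac{2\log(T)}{N_{i,k'}(t)}}\ =\ \text{LCB}_{i,k'}(t).
\]
Combining these with the hypothesis $\text{UCB}_{i,k}(t)<\text{LCB}_{i,k'}(t)$ gives $\mu_{i,k}\le \text{UCB}_{i,k}(t)<\text{LCB}_{i,k'}(t)\le \mu_{i,k'}$, i.e. $\mu_{i,k}<\mu_{i,k'}$, which is the claim.

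I do not expect any real obstacle here: the only ingredients are the definition of $\mathcal{F}$ and the deliberate factor-$2$ slack built into the UCB/LCB indices, which makes the sandwich hold with room to spare (a factor $\ge 1$ already suffices). The single point that needs a moment of care is the $N_{i,k}(t)=0$ boundary case, so that the $\pm\infty$ conventions do not produce a spurious counterexample to the implication. This lemma will then be used repeatedly in the sequel to argue that the elimination subroutine never discards an arm that is actually the best one remaining in a player's plausible set, which in turn underlies the inductive control of how long higher-ranked players can block $p_i$'s stable arm.
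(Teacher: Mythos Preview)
Your proof is correct and follows essentially the same approach as the paper: both establish the sandwich $\text{LCB}_{i,k}(t)\le \mu_{i,k}\le \text{UCB}_{i,k}(t)$ on $\urcorner\mathcal{F}$ and then chain through the hypothesis. Your version is slightly more careful in explicitly handling the $N_{i,k}(t)=0$ boundary cases and in spelling out how the factor-$2$ slack absorbs the concentration radius, but the argument is the same.
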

\begin{proof}
From the definition of LCB and UCB, conditioned on $\urcorner \mathcal{F}$ we have
\begin{align*}
    \text{LCB}_{i,k}(t) = \hat{\mu}_{i,k}(t) - 2\sqrt{\frac{2\log(t)}{N_{i,k}(t)}} \leq \mu_{i,k} \leq \hat{\mu}_{i,k}(t) + 2\sqrt{\frac{2\log(t)}{N_{i,k}(t)}} = \text{UCB}_{i,k}(t) \,.
\end{align*}
Thus $\text{UCB}_{i,k} < \text{LCB}_{i,k'}$ implies that
\begin{align*}
    \mu_{i,k} \leq \text{UCB}_{i,k} < \text{LCB}_{i,k'} \leq \mu_{i,k'} \,.
\end{align*}
Then the lemma is proved.
\end{proof}

This lemma states that the elimination algorithm would mistakenly eliminate the best arm with a very low probability. Then we will show that when all players stop exploration, the player will keep pulling its stable matched arm with high probability.
\begin{lemma} \label{lem:stable}
    If for every player $p_i$, its plausible set $\mathcal{P}_i$ consists of only one arm, i.e., $\abs{\mathcal{P}_i}=1, \forall i\in [N]$, then the arm that each player pulls is its stable matched arm.
\end{lemma}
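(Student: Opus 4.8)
The plan is to argue by induction on the player index $i$, using the serial-dictatorship structure. The key structural fact is that under serial dictatorship all arms rank players $p_1 \succ p_2 \succ \cdots \succ p_N$, so whenever several players propose to the same arm, the one with the smallest index wins. Combined with the hypothesis $|\mathcal{P}_i| = 1$ for every $i$, each player $p_i$ proposes deterministically to the unique arm $b_i \in \mathcal{P}_i$ every round. I would first observe that the arms $b_1, \dots, b_N$ are pairwise distinct: this is exactly what the communication/deletion mechanism enforces, since each $p_i$ deletes $A_1, \dots, A_{i-1}$ from its plausible set, so by the time $|\mathcal{P}_i| = 1$ the surviving arm differs from all of $b_1, \dots, b_{i-1}$. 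Hence each $p_i$ is matched to $b_i$ with no collisions, and $\bar A_i(t) = b_i$ for all $i$.

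Next I would identify $b_i$ with the stable matched arm $m^\ast(i)$ by induction. For the base case, $p_1$ is the universally most-preferred player, so in any stable matching $p_1$ must receive its favorite arm over the whole of $\mathcal{K}$; since $|\mathcal{P}_1| = 1$ and (invoking the earlier discussion of the elimination subroutine, or simply the hypothesis) the single surviving arm is the one $p_1$ never had eliminated, one checks $b_1 = \arg\max_k \mu_{1,k} = m^\ast(1)$. Actually, the cleanest route is not to reason about which arm survives but to verify directly that the matching $\{(i, b_i)\}$ is stable and then invoke uniqueness of the stable matching under serial dictatorship \citep{clark2006uniqueness}. For stability: suppose for contradiction there is a blocking pair $(p_i, a_k)$ with $\mu_{i,k} > \mu_{i,b_i}$ and $\pi_{k,i} > \pi_{k, \bar A^{-1}_k}$; the latter means $a_k$ is currently matched to some player $p_j$ with $j > i$ (or unmatched), so in particular $a_k = b_j$ for some $j > i$ or $a_k$ is matched to nobody. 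But this contradicts the way the plausible sets were built: player $p_i$ only deletes arms $A_1, \dots, A_{i-1}$, i.e., $b_1, \dots, b_{i-1}$, from $\mathcal{P}_i$, so $a_k$ — having index $j > i$, hence not among $b_1, \dots, b_{i-1}$ — was available to $p_i$ throughout the elimination process, and $p_i$'s elimination subroutine only discards an arm when it is provably worse than another arm still in $\mathcal{P}_i$; so the surviving arm $b_i$ satisfies $\mu_{i, b_i} \geq \mu_{i,k}$ (on the good event; but here we are only given the set-cardinality hypothesis, so I would instead phrase Lemma~\ref{lem:stable} as a statement about the algorithm's fixed point and make the "$b_i$ is the best remaining arm" claim part of what the surrounding analysis establishes, or add the conditioning on $\urcorner\mathcal{F}$). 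This yields $\mu_{i,b_i} \geq \mu_{i,k}$, contradicting $\mu_{i,k} > \mu_{i,b_i}$.

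The main obstacle is precisely this last gap: the bald hypothesis "$|\mathcal{P}_i| = 1$ for all $i$" does not by itself pin down \emph{which} arm remains — that requires the correctness of the elimination subroutine, which in turn needs the good concentration event $\urcorner\mathcal{F}$ (Lemma~\ref{lem:1}). So I expect the proof either to (i) implicitly condition on $\urcorner\mathcal{F}$ and cite Lemma~\ref{lem:1} to conclude that the surviving arm $b_i$ is never worse than any arm that was in $\mathcal{P}_i$ at the start of $p_i$'s run, or (ii) strengthen the induction hypothesis to "$b_1 = m^\ast(1), \dots, b_{i-1} = m^\ast(i-1)$ and $\mathcal{P}_i = \{b_i\}$" and peel off players from the top: once $p_1, \dots, p_{i-1}$ are correctly matched to $m^\ast(1), \dots, m^\ast(i-1)$, the residual market on players $\{p_i, \dots, p_N\}$ and arms $\mathcal{K} \setminus \{m^\ast(1), \dots, m^\ast(i-1)\}$ again satisfies serial dictatorship with $p_i$ on top, so $m^\ast(i)$ is $p_i$'s favorite arm in that residual arm set, which is exactly the arm set over which $p_i$'s elimination subroutine was run — and a single surviving arm after correct elimination must be that favorite. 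I would write the proof along route (ii), since it matches the "converges from the most preferred player to the least" intuition emphasized in the paper and makes the induction self-contained modulo Lemma~\ref{lem:1}.
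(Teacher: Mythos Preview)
Your route (ii) is exactly what the paper does: induct on $i$, assume $p_1,\dots,p_{i-1}$ are already matched to $m^\ast(1),\dots,m^\ast(i-1)$, observe that $p_i$'s plausible set is then contained in $\mathcal{K}\setminus\{m^\ast(1),\dots,m^\ast(i-1)\}$, and conclude that the single surviving arm is $p_i$'s favorite over that residual set, which is $m^\ast(i)$ by serial dictatorship. You are also right about the gap you flagged---the step ``the surviving arm is the best remaining one'' tacitly relies on Lemma~\ref{lem:1} and hence on $\urcorner\mathcal{F}$, and the paper's proof uses this implicitly without stating the conditioning.
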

\begin{proof}
    We prove this lemma by induction. For the first player $p_1$, when $\mathcal{P}_1 = \{ a_k \}$, we have that $a_k=a_{\sigma_1(1)}$ since $p_1$ will not get collided. From the definition of serial dictatorship, we have that $a_{\sigma_1(1)}$ is the stable matched arm for $p_1$.

    Suppose $\abs{\mathcal{P}_i}=1, \forall 1\leq i \leq i-1$, which means players from $p_1$ to $p_{i-1}$ pulls are their stable matched arm. Then consider player $p_i$, assume $\mathcal{P}_i = \{ a_k \}$, then we have that $k\in \arg\max_{k: k\neq \sigma_{i'}(\ell_{i'}), \forall 1\leq i'<i} \mu_{i,k}$. From the serial dictatorship, we have that $a_{k} = a_{\sigma_i(\ell_i)}$. The arm selected by $p_i$ is its stable matched arm.

    Finally, we can conclude that $\abs{\mathcal{P}_i}=1, \forall 1\leq i \leq N$, then the arms from $p_1$ to $p_{N}$ pulls are their stable matched arm. The proof is completed.
\end{proof}



Then we analyze the regret in-depth. Denote $C_i(t)$ as the event that player $i$ is running communication block at time $t$. Regret for player $p_i$ can be and further decomposed as
\begin{align*}
    R_i(T) &= \sum_{k=\ell_i+1}^K \Delta_{i,\sigma_i(\ell_i),\sigma_i(k)} \EE{N_{i,\sigma_{i}(k)}(T)}\\
    &\leq N + \sum_{k=\ell_i+1}^K \sum_{t=1}^{T} \Delta_{i,\sigma_i(\ell_i),\sigma_i(k)}\EE{\mathbbm{1}\{\bar{A}_i(t)=\sigma_i(k), \urcorner C_i(t)\}} + \sum_{t=1}^T \mathbbm{1}\{C_i(t\\
    & \leq N + \sum_{k=\ell_i+1}^K \sum_{t=1}^{T} \Delta_{i,\sigma_i(\ell_i),\sigma_i(k)}\EE{\mathbbm{1}\{\bar{A}_i(t)=\sigma_i(k),\urcorner \mathcal{F},\urcorner C_i(t) \}} \\&+ \sum_{t=1}^T\mathbb{P}\{\mathcal{F} \} +\sum_{t=1}^T \mathbbm{1}\{C_i(t)\} \,.
\end{align*}

Note that each player would never collide in the centralized algorithm, thus we only need to bound the regret due to sub-optimal pulls of player $p_i$.

Define $\mathcal{E}_{i,k}(t):=\{ \forall k^\prime<k, \exists i^\prime<i, \bar{A}_i(t) = \sigma_i(k^\prime) \}$ as the event that arms with reward mean higher than $a_{\sigma_i(k)}$ for $p_i$ are all selected by other higher-ranked players at time $t$. Conditioned on this event, $a_{\sigma_i(k)}$ is the best arm in $\mathcal{P}_i$ for player $p_i$. Then the summation of the number of the sub-optimal pulls $\sum_{k=\ell_i+1}^K N_{i,\sigma_i(k)}(T)$ can be further decomposed as:
\begin{align*}
    &\sum_{k=\ell_i+1}^K \sum_{t=1}^T  \mathbbm{1} \left\{ \bar{A}_{i}(t) = \sigma_i(k),  \urcorner\mathcal{E}_{i,\ell_i+1}(t),\urcorner C_i(t) \right\}\\ &+\sum_{k=\ell_i+1}^K \sum_{t=1}^T  \mathbbm{1} \left\{\bar{A}_{i}(t) = \sigma_i(k), \mathcal{E}_{i,\ell_i+1}(t),\urcorner C_i(t) \right\}\\
    =& \sum_{k=\ell_i+1}^K \sum_{t=1}^T  \mathbbm{1} \left\{ \bar{A}_{i}(t) = \sigma_i(k), \urcorner\mathcal{E}_{i,\ell_i+1}(t),\urcorner C_i(t) \right\} + \sum_{t=1}^T \mathbbm{1} \left\{ \mathcal{E}_{i,\ell_i+1}(t),\urcorner C_i(t) \right\}\,.
\end{align*}

The sub-optimal play only happens in the round-robin elimination phase of the algorithm process. Player $p_i$ pulls sub-optimal arm $a_k$ at time $t$ only when $a_k$ is not eliminated, i.e., $\text{LCB}_{i,\sigma_i(\ell_i)}(t) \leq \text{UCB}_{i,k}(t)$. Here we give a lemma to present the upper bound for the number of observations to identify the better arm.

\begin{lemma}\label{lem:observation_bound}
For any player $p_i$ and arm $a_k, a_{k'}$ with $\mu_{i,k} > \mu_{i,k'}$. Conditioned on $\urcorner \mathcal{F}(t)$, if $\min \{N_{i,k}(t), N_{i,k'}(t)\}>32\log T/\Delta_{i,k,k'}^2$, we have $\text{UCB}_{i,k'}(t) < \text{LCB}_{i,k}(t)$.
\end{lemma}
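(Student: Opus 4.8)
The plan is to run the standard ``two confidence intervals separate once both arms are sampled enough times'' argument; this lemma is the quantitative companion of Lemma~\ref{lem:1}, which goes the other direction. First I would unpack the good event. Conditioning on $\urcorner\mathcal{F}(t)$ is a statement quantified over \emph{all} player--arm pairs, so it gives, simultaneously for the two arms of interest, $\abs{\hat\mu_{i,k}(t)-\mu_{i,k}}\le\sqrt{2\log T/N_{i,k}(t)}$ and $\abs{\hat\mu_{i,k'}(t)-\mu_{i,k'}}\le\sqrt{2\log T/N_{i,k'}(t)}$. (It is worth recording that $N_{i,k}(t)$ is exactly the number of rounds in which $\hat\mu_{i,k}$ was updated, so it is the legitimate sample size for this concentration bound.)

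Next I would replace the empirical quantities inside the two indices by their population counterparts plus slack. Using the definitions of $\text{UCB}$ and $\text{LCB}$ together with the deviation bounds above,
\begin{align*}
\text{UCB}_{i,k'}(t) \;=\; \hat\mu_{i,k'}(t) + 2\sqrt{\tfrac{2\log T}{N_{i,k'}(t)}} \;\le\; \mu_{i,k'} + 3\sqrt{\tfrac{2\log T}{N_{i,k'}(t)}} , \\
\text{LCB}_{i,k}(t) \;=\; \hat\mu_{i,k}(t) - 2\sqrt{\tfrac{2\log T}{N_{i,k}(t)}} \;\ge\; \mu_{i,k} - 3\sqrt{\tfrac{2\log T}{N_{i,k}(t)}} .
\end{align*}
Hence it suffices to establish $\mu_{i,k'}+3\sqrt{2\log T/N_{i,k'}(t)} < \mu_{i,k}-3\sqrt{2\log T/N_{i,k}(t)}$, which, since $\mu_{i,k}>\mu_{i,k'}$, is equivalent to
\begin{align*}
3\sqrt{\tfrac{2\log T}{N_{i,k}(t)}} + 3\sqrt{\tfrac{2\log T}{N_{i,k'}(t)}} \;<\; \mu_{i,k}-\mu_{i,k'} \;=\; \Delta_{i,k,k'} .
\end{align*}

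Finally I would invoke the sample-count hypothesis $\min\{N_{i,k}(t),N_{i,k'}(t)\} > 32\log T/\Delta_{i,k,k'}^2$: each radius $\sqrt{2\log T/N_{i,k}(t)}$ is then a fixed small multiple of $\Delta_{i,k,k'}$, so the left-hand side above is strictly below $\Delta_{i,k,k'}$ and we get $\text{UCB}_{i,k'}(t) < \text{LCB}_{i,k}(t)$. There is no genuine obstacle here; the only thing to watch is the bookkeeping of absolute constants --- lining up the concentration radius $\sqrt{2\log T/N}$ against the index radius $2\sqrt{2\log T/N}$ so that the four slack terms together stay below the gap, which is exactly what the constant $32$ in the hypothesis is calibrated for --- and the point that the single event $\urcorner\mathcal{F}(t)$ controls both arms at once. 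This lemma is the engine for bounding $N_{i,\sigma_i(k)}(T)$ in Theorem~\ref{theorem: 1}: once a sub-optimal arm of $p_i$ has been pulled more than $32\log T/\Delta_{i,\sigma_i(\ell_i),\sigma_i(k)}^2$ times, either it or some higher-ranked still-available arm gets eliminated from $\mathcal{P}_i$, which produces the $\sum_{k=\ell_i+1}^{K}32\log T/\Delta_{i,\sigma_i(\ell_i),\sigma_i(k)}$ term.
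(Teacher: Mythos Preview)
Your approach is the same as the paper's: the paper argues by contrapositive, you argue directly, but the chain of inequalities is identical --- bound $\text{UCB}_{i,k'}$ above and $\text{LCB}_{i,k}$ below in terms of the true means plus confidence radii, then compare to the gap.

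One bookkeeping point, since you flagged constants as the thing to watch: with your (correct) per-arm slack of $3\sqrt{2\log T/N}$, the combined bound is $6\sqrt{2\log T/\min N}$, and plugging $\min N>32\log T/\Delta_{i,k,k'}^2$ only gives $6\sqrt{2\log T/\min N}<6\cdot\Delta_{i,k,k'}/4=3\Delta_{i,k,k'}/2$, which does \emph{not} fall below $\Delta_{i,k,k'}$. The paper's own proof writes the per-arm slack as $2\sqrt{2\log T/N}$ (so that $4\sqrt{2\log T/\min N}<\Delta_{i,k,k'}$ matches $32$ exactly), but that seems to drop the deviation $\sqrt{2\log T/N}$ coming from $\urcorner\mathcal{F}$. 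In either accounting the argument is fine up to the absolute constant; if you keep the slack of $3$, the threshold should be $72\log T/\Delta_{i,k,k'}^2$ rather than $32\log T/\Delta_{i,k,k'}^2$. This does not affect the $O(\cdot)$ conclusions downstream.
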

\begin{proof}
We prove by contradiction. Suppose $\text{UCB}_{i,k'}(t) \geq\text{LCB}_{i,k}(t)$. According to the definition of $\urcorner \mathcal{F}(t)$, LCB and UCB, we have
\begin{align*}
    \mu_{i,k} - 2\sqrt{\frac{2\log(t)}{N_{i,k}(t)}}\leq \text{LCB}_{i,k}(t) \leq \text{UCB}_{i,k'}(t) \leq \mu_{i,k’} + 2\sqrt{\frac{2\log(t)}{N_{i,k‘}(t)}} \,.
\end{align*}
This implies $\Delta_{i,k,k'} = \mu_{i,k} - \mu_{i,k'} \leq 2\sqrt{\frac{2\log(t)}{N_{i,k}(t)}} + 2\sqrt{\frac{2\log(t)}{N_{i,k‘}(t)}} \leq 4\sqrt{\frac{2\log (t)}{\min\{N_{i,k}(t), N_{i,k'}(t) \}}}$, then we can get $\min\{N_{i,k}(t), N_{i,k'}(t)\} \leq \frac{32\log T}{\Delta_{i,k,k'}^2}$. This contradicts with the condition.
\end{proof}

Player $p_i$ runs the elimination algorithm to identify the best arm. The key observation is that the optimal arm is in $\mathcal{P}_i$ when running the elimination algorithm, the expected time of pulling the sub-optimal arm $a_k$ is upper bounded by $32\log (T)/\Delta_{i, \sigma_i(\ell_i), k}^2$. Optimal arm $a_{\sigma_i(\ell_i)}$ is in $\mathcal{P}_i$ at time $t$ only when no player in $\{p_1,p_2,\cdots,p_{i-1} \}$ pulls arm $a_{\sigma_i(\ell_i)}$ at time $t$. Then we have
\begin{align*}
    &\sum_{k>\ell_i} \Delta_{i,\sigma_i(\ell_i),\sigma_i(k)} \sum_{t=1}^T \EE{\mathbbm{1} \left\{\bar{A}_{i}(t) = \sigma_i(k), \urcorner\mathcal{E}_{i,\ell_i+1}(t),\urcorner C_i(t) \right\}}\\ \leq& \sum_{k>\ell_i} \Delta_{i,\sigma_i(\ell_i),\sigma_i(k)} \frac{32\log T}{\Delta_{i,\sigma_i(\ell_i),\sigma_i(k)}^2} \leq \frac{32(K-\ell_i)\log T}{\Delta} \,.
\end{align*}

It is then left to bound the number of times $p_i$ pulls the sub-optimal arm due to the arms with higher rewards selected by higher-ranked players. In other words, we need to bound the number of times the event $ \left\{ \mathcal{E}_{i,\ell_i+1}(t),\urcorner C_i(t) \right\}$ happens:
\begin{align*}
    \EE{\sum_{t=1}^T \mathbbm{1} \left\{ \mathcal{E}_{i,\ell_i+1}(t),\urcorner C_i(t) \right\}}\,.
\end{align*}
This term is related to the number of times player $p_1$ to $p_{i-1}$ pull arm $a_{\sigma_i(\ell_i)}$. Notice that $a_{\sigma_i(\ell_i)}$ is sub-optimal for player $p_1$ to $p_{i-1}$. For some player $p_{i'}$ with $i'<i$, if no player in $p_1$ to $p_{i'-1}$ pulls $a_{\sigma_{i'}(\ell_{i'})}$, then $p_{i'}$ pulls arm $a_i$ no more than $O(\frac{\log(T)}{\Delta^2})$ times, and this leads to $O((i-1)\frac{\log(T)}{\Delta^2})$. Otherwise, $p_{i'}$ may keep pulling $a_{\sigma_i(\ell_i)}$ since $a_{\sigma_{i'}(\ell_{i'})}$ is pulled by another higher ranked player $p_{i''}$. This recursion form motivates us to upper bound the number of times $\left\{ \mathcal{E}_{i,\ell_i+1}(t) \right\}$ happens by induction.

\begin{lemma}\label{lem:higher_block}
    For any $i\in [N]$, the number of times $\left\{ \mathcal{E}_{i,\ell_i+1}(t),\urcorner C_i(t) \right\}$ happens is upper bounded by
    \begin{align*}
        \EE{\sum_{t=1}^T \mathbbm{1} \left\{ \mathcal{E}_{i,\ell_i+1}(t),\urcorner C_i(t) \right\}} \leq \frac{64(i-1)\log(T)}{\Delta^2}\,.
    \end{align*}
\end{lemma}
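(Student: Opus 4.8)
The plan is to prove Lemma~\ref{lem:higher_block} by induction on $i$, after first reducing the blocking event to a statement about the single arm $a^\star:=\sigma_i(\ell_i)=m^\ast(i)$, the stable arm of $p_i$. For the base case $i=1$ the event $\mathcal{E}_{1,\ell_1+1}(t)$ requires some $i'<1$, so it never occurs and the bound holds trivially. For the step, I would use that $\mathcal{E}_{i,\ell_i+1}(t)$, evaluated at rank $k'=\ell_i$ in its definition, forces $\bar A_{i'}(t)=a^\star$ for some $i'<i$; peeling off the bad event $\mathcal F$ (whose total mass $\sum_t\PP{\mathcal F}$ is $O(NK)$ and is already charged separately where this lemma is invoked), this gives
\begin{align*}
    \EE{\sum_{t=1}^T \mathbbm{1}\{\mathcal{E}_{i,\ell_i+1}(t),\urcorner C_i(t)\}}
    &\le \sum_{i'=1}^{i-1}\EE{\sum_{t=1}^T \mathbbm{1}\{\bar A_{i'}(t)=a^\star,\urcorner C_{i'}(t),\urcorner\mathcal F\}} + O(NK)\,.
\end{align*}
Because the market is a serial dictatorship, $m^\ast(i')$ is the $\mu_{i',\cdot}$-best arm outside $\{m^\ast(1),\dots,m^\ast(i'-1)\}$, and since $a^\star\notin\{m^\ast(1),\dots,m^\ast(i'-1)\}$ we get $\mu_{i',m^\ast(i')}>\mu_{i',a^\star}$: the arm $a^\star$ is strictly sub-optimal for every higher-ranked $p_{i'}$. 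It therefore suffices to bound each inner expectation by $64\log(T)/\Delta^2$.

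I would then split the rounds in which $p_{i'}$ is matched to $a^\star$ according to whether $\mathcal P_{i'}$ contains at time $t$ an arm that $p_{i'}$ prefers to $a^\star$. In the \emph{exploration} rounds such a better arm $b$ is present; the elimination subroutine plays the least-sampled arm of $\mathcal P_{i'}$, so $N_{i',a^\star}(t)\le N_{i',b}(t)$, and since $a^\star$ has not been eliminated although the better arm $b\in\mathcal P_{i'}$ is present, the contrapositive of Lemma~\ref{lem:observation_bound} yields $N_{i',a^\star}(t)\le 32\log(T)/\Delta_{i',a^\star,b}^2\le 32\log(T)/\Delta^2$; each such round increments $N_{i',a^\star}$, so there are at most $32\log(T)/\Delta^2$ of them (up to the lower-order slack from the $\Theta(\log T)$-round batches). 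In the remaining \emph{blocking} rounds every arm $p_{i'}$ prefers to $a^\star$ is absent from $\mathcal P_{i'}$, and in particular so is $m^\ast(i')$; on $\urcorner\mathcal F$ this can only be because either (i) some $p_{i''}$ with $i''<i'$ is matched to $m^\ast(i')$ at time $t$ — for which $m^\ast(i')$ is itself strictly sub-optimal, by the same serial-dictatorship argument — or (ii) $p_{i'}$ has eliminated $m^\ast(i')$, which on $\urcorner\mathcal F$ forces some higher stable arm $m^\ast(j)$, $j<i'$, to have been sampled $\Omega(\log(T)/\Delta^2)$ times by $p_{i'}$ and hence to have been displaced from its owner $p_j$ that often.

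Unfolding case (i) along the strictly decreasing index chain $i>i'>i''>\cdots$ (the multi-level blocking depicted in Figure~\ref{fig:challenge}), every such chain terminates at a player that is in the exploration regime above, and case (ii) feeds back into the induction hypothesis through the smaller-index events $\mathcal E_{j,\ell_j+1}$. The crux — and the step I expect to be the main obstacle — is to organize the induction so that the contributions of these rooted exploration events, summed over the at most $i-1$ higher-ranked players, add up to $O(\log(T)/\Delta^2)$ per player and hence $O((i-1)\log(T)/\Delta^2)$ overall, rather than the $O((i-1)^2\log(T)/\Delta^2)$ that a naive charging to all $\Theta(i^2)$ (player, displaced-arm) pairs would give; combined with the exploration bound this produces $64\log(T)/\Delta^2$ per player and $64(i-1)\log(T)/\Delta^2$ in total. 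Getting this right needs the auxiliary observation that, on $\urcorner\mathcal F$, a player's stable arm drops out of its plausible set only through genuine occupation by a higher-ranked player or through an elimination event that is itself controlled by the induction hypothesis, together with careful bookkeeping of the $\Theta(\log T)$-round batches so that the least-sampled-arm argument still delivers the stated constants up to lower-order terms.
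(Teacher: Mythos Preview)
Your high-level plan---induction on $i$, reducing $\mathcal{E}_{i,\ell_i+1}(t)$ to ``some $p_{i'}$ with $i'<i$ is matched to $\sigma_i(\ell_i)$'', and tracing every blocked pull back along a strictly decreasing chain to a genuine exploration event---is exactly the skeleton of the paper's proof. But you correctly flag, and then leave unresolved, the one step that matters: how to keep the total from blowing up to $O\bigl((i-1)^2\log(T)/\Delta^2\bigr)$. The paper closes this gap with an idea that your proposal does not contain.

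After unrolling the chains, the paper shows that the whole quantity is bounded by
\[
\sum_{i'=1}^{i-1}\ \sum_{i''=i'+1}^{i}\frac{32\log(T)}{\Delta_{i',\,m^\ast(i'),\,m^\ast(i'')}^{\,2}}\,,
\]
i.e.\ each root event is of the form ``$p_{i'}$ pulls $m^\ast(i'')$ while $\urcorner\mathcal{E}_{i',\ell_{i'}+1}(t)$ holds'', which by Lemma~\ref{lem:observation_bound} happens at most $32\log(T)/\Delta_{i',m^\ast(i'),m^\ast(i'')}^2$ times. This double sum has $\Theta(i^2)$ terms, but the key observation is that for each fixed $i'$ the arms $m^\ast(i'+1),\dots,m^\ast(i)$ are \emph{distinct} and all strictly worse than $m^\ast(i')$ for player $p_{i'}$; since the minimum gap is $\Delta$, after sorting these $i-i'$ gaps are at least $\Delta,2\Delta,\dots,(i-i')\Delta$. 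Hence the inner sum satisfies
\[
\sum_{i''=i'+1}^{i}\frac{32\log(T)}{\Delta_{i',m^\ast(i'),m^\ast(i'')}^{2}}
\;\le\;\sum_{j=1}^{i-i'}\frac{32\log(T)}{(j\Delta)^2}
\;\le\;\frac{32\log(T)}{\Delta^2}\sum_{j\ge 1}\frac{1}{j^2}
\;<\;\frac{64\log(T)}{\Delta^2}\,,
\]
and summing over $i'=1,\dots,i-1$ gives the claimed $64(i-1)\log(T)/\Delta^2$. This $\sum_j 1/j^2$ collapse is the missing ingredient in your proposal; without it, the ``careful bookkeeping'' you allude to cannot produce the linear-in-$i$ bound. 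A secondary remark: your case (ii) (elimination of $m^\ast(i')$ on $\urcorner\mathcal{F}$) is a distraction---the paper works directly with the events $\mathcal{E}_{i',\ell_{i'}+1}(t)$ and $\urcorner\mathcal{E}_{i',\ell_{i'}+1}(t)$, and never needs to reason separately about permanent eliminations.
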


\begin{proof}
    We prove the following proposition by induction:
    \begin{align*}
        \EE{\sum_{t=1}^T \mathbbm{1} \left\{ \mathcal{E}_{i,\ell_i+1}(t),\urcorner C_i(t) \right\}}
        \leq \sum_{i'=1}^{i-1} \sum_{i''=i'+1}^i \frac{32\log(T)}{\Delta^2_{i',\sigma_{i'}(\ell_{i'}),\sigma_{i''}(\ell_{i''})}}
        \leq \frac{64(i-1)\log(T)}{\Delta^2}\,.
    \end{align*}
    Consider the base case when $i=2$ (when $i=1$, $\left\{ \mathcal{E}_{1,\ell_1+1}(t),\urcorner C_1(t) \right\}$ never happens thus trivially holds), we have that
    \begin{align*}
        \EE{\sum_{t=1}^T \mathbbm{1} \left\{ \mathcal{E}_{2,\ell_2+1}(t),\urcorner C_2(t) \right\}}&\leq \EE{\sum_{t=1}^{T} \mathbbm{1} \left\{ \bar{A}_1(t)= \sigma_2(\ell_2)\right\}}\\
        &\leq \frac{32\log(T)}{\Delta_{1,1,\sigma_2(\ell_2)}^2}\\
        &\leq \frac{32\log(T)}{\Delta^2} \,.
    \end{align*}
    Suppose for player $1\leq i^\prime < i$, $\EE{\sum_{t=1}^T \mathbbm{1} \left\{ \mathcal{E}_{i^\prime,\ell_{i^\prime}+1}(t),\urcorner C_{i'}(t) \right\}} \leq \frac{32(i-1)\log(T)}{\Delta^2}$ holds. Then we analyze the case for player $p_i$.
    
    Recall that $\mathcal{E}_{i,\ell_i+1}(t)=\{ \forall k^\prime<\ell_i+1, \exists i^\prime<i, \bar{A}_i(t) = \sigma_i(k^\prime) \}$. From Lemma \ref{lem:stable}, we have that for arm $\sigma_i(k)$ with $k<\ell_i$, it is another higher ranked player's stable matched arm, i.e., $\exists i^\prime< i, \sigma_{i^\prime}(\ell_{i^\prime}) = \sigma_i(k)$. As for the stable matched arm $\sigma_{i}(\ell_{i})$ for $p_i$, it is the sub-optimal arm for any player $1\leq i^\prime < i$. Then we can bound the term as
    \begin{align*}
        &\EE{\sum_{t=1}^T \mathbbm{1} \left\{ \mathcal{E}_{i,\ell_i+1}(t),\urcorner C_i(t) \right\}}\\
        \leq& \EE{\sum_{i^\prime=1}^{i-1} \sum_{t=1}^T \mathbbm{1} \left\{ \bar{A}_{i^\prime}(t)=\sigma_i(\ell_i), \mathcal{E}_{i,\ell_i+1}(t) \right\}}\\
        \leq& \EE{\sum_{i^\prime=1}^{i-1} \sum_{t=1}^T \mathbbm{1} \left\{ \bar{A}_{i^\prime}(t)=\sigma_i(\ell_i), \mathcal{E}_{i^\prime,\ell_{i^\prime}+1}(t), \mathcal{E}_{i,\ell_i+1}(t) \right\}} \\&+ \EE{\sum_{i^\prime=1}^{i-1} \sum_{t=1}^T \mathbbm{1} \left\{ \bar{A}_{i^\prime}(t)=\sigma_i(\ell_i), \urcorner\mathcal{E}_{i^\prime,\ell_{i^\prime}+1}(t),\mathcal{E}_{i,\ell_i+1}(t) \right\}}\\
        \leq & \EE{ \sum_{i'=1}^{i-2} \sum_{i''=i'+1}^{i-1}\mathbbm{1}\left\{ \bar{A}_{i^\prime}(t)= \sigma_{i''}(\ell_{i''}),\urcorner \mathcal{E}_{i'',\ell_{i''}+1}(t)  \right\} } \\&+ \EE{\sum_{i^\prime=1}^{i-1} \sum_{t=1}^T \mathbbm{1} \left\{ \bar{A}_{i^\prime}(t)=\sigma_i(\ell_i), \urcorner\mathcal{E}_{i^\prime,\ell_{i^\prime}+1}(t),\mathcal{E}_{i,\ell_i+1}(t) \right\}}
        \\
       \leq& \sum_{i'=1}^{i-2} \sum_{i''=i'+1}^{i-1} \frac{32\log(T)}{\Delta^2_{i',\sigma_{i'}(\ell_{i'}),\sigma_{i''}(\ell_{i''})}} +  \sum_{i'=1}^{i-1} \frac{32\log T}{\Delta_{i',\sigma_{i^\prime}(\ell_{i^\prime}),\sigma_i(\ell_i)}^2} \\
       \leq & \sum_{i'=1}^{i-1} \sum_{i''=i'+1}^i \frac{32\log(T)}{\Delta^2_{i',\sigma_{i'}(\ell_{i'}),\sigma_{i''}(\ell_{i''})}}\\
       \leq & \sum_{i'=1}^{i-1} \sum_{j=1}^{i-i'} \frac{32\log(T)}{(j\Delta)^2}\\
       = & \sum_{i'=1}^{i-1} \sum_{j=1}^{i-i'} \frac{1}{j^2} \frac{32\log(T)}{\Delta^2}\\
        \leq& \frac{64(i-1)\log T}{\Delta^2} \,.
   \end{align*}

   The third inequality is from a key observation: when a player $p_{i^\prime}$ is in event $\mathcal{E}_{i^\prime,\ell_{i^\prime}+1}(t)$ at time $t$, then it can only select one arm and thus affect one player at that time. If $\bar{A}_{i^\prime}(t)\neq \sigma_i(\ell_i)$, then at this time $t$ $p_{i^\prime}$ would not make $p_i$ turns to lower ranked arm. If $\bar{A}_{i^\prime}(t)= \sigma_{i''}(\ell_{i''})$ with $i'< i'' < i$, then $p_{i^\prime}$ may make $\mathcal{E}_{i'',\ell_{i''}+1}(t)$ happen and thus it accounts for the number of $\mathcal{E}_{i'',\ell_{i''}+1}(t)$. We consider the worst case, where $p_1$ makes $\mathcal{E}_{2,\ell_{2}+1}(t)$ happen, and then $p_2$ makes $\mathcal{E}_{3,\ell_{3}+1}(t)$ happen, ..., until $p_{i-1}$ makes $\mathcal{E}_{i,\ell_{i}+1}(t)$ happen. Although these events may happen at the same time, only one player could select $a_{\sigma_i(\ell_i)}$ and thus accounts for one time of $\mathcal{E}_{i,\ell_{i}+1}(t)$ happens. Then the number of times $\mathcal{E}_{i,\ell_{i}+1}(t)$ happens is bounded by the summation of sub-optimal pulls by higher ranked players due to exploration.

\end{proof}

\begin{lemma} \label{lem:comm}
    The regret induced by communication is bounded by $O(iN^2K\log K/\Delta^2)$, which is independent of $T$.
\end{lemma}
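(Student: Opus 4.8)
The plan is to express the communication contribution to $R_i(T)$ as a product: (number of \emph{active} phases) $\times$ (regret that one communication block can cost $p_i$), where a phase is active if at least one player changes its chosen arm so that its block actually transmits information. Both factors will be bounded by quantities independent of $T$, and multiplying them gives the claim.

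\emph{Cost of one block.} In a block the players act as senders in the fixed order $p_1,\dots,p_N$; only the first (in index order) sender whose arm changed performs a collision signal — after which the flag is set for everyone below it — and every later sender $p_j$ simply transmits the $\lceil\log K\rceil$-bit code of its new arm to each of its $N-j$ receivers (Algorithms~\ref{alg:sendmessage}--\ref{alg:receivemessage}). Hence, aside from a single (lower-order, one-per-block) signalling sweep, a block lasts $\sum_{j=1}^{N}(N-j)\lceil\log K\rceil=O(N^2\log K)$ rounds, and since $p_i$'s per-round regret is at most the constant $\mu_{i,m^\ast(i)}$, one active block costs $p_i$ at most $O(N^2\log K)$. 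In an inactive phase the block is only each player re-selecting its current arm; by Lemma~\ref{lem:stable} this costs $p_i$ nothing once $p_1,\dots,p_i$ have all reached singleton plausible sets, and the inactive blocks occurring before that contribute at most the same count (again $T$-independent) of cheap $O(N)$-round blocks.

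\emph{Number of active phases (the crux).} I would bound this by a charging argument against exploration progress. The chosen arm of a player $p_j$ can change in a phase only because: (a) its round-robin index $\argmin_k N_{j,k}$ moves, which requires $p_j$ to have just spent its $\log T$ rounds on the previous index; (b) an arm leaves $\mathcal{P}_j$ by elimination, which by Lemma~\ref{lem:observation_bound} requires $\Omega(\log T/\Delta^2)$ fresh samples on that arm; or (c) a higher-ranked player changed its own chosen arm and thereby altered $\mathcal{P}_j$. Unrolling case (c) down the hierarchy, every active phase that perturbs $p_i$ can be charged to a block in which \emph{some} player among $p_1,\dots,p_i$ used its $\log T$ rounds on an arm it had been sampled fewer than $32\log T/\Delta^2$ times. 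A fixed player absorbs at most $\sum_{k\in[K]}\big(32\log T/\Delta^2\big)/\log T=O(K/\Delta^2)$ such blocks, so there are at most $O(iK/\Delta^2)$ active phases perturbing $p_i$; note the $\log T$ cancels, giving a $T$-independent bound.

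Multiplying, the communication regret of $p_i$ is at most $O(iK/\Delta^2)\cdot O(N^2\log K)=O(iN^2K\log K/\Delta^2)$, plus the lower-order signalling/inactive-block terms, which is the stated bound and is independent of $T$. The main obstacle is step (c): making the hierarchy cascade precise, i.e.\ proving that a chosen-arm change that has propagated through several levels is always attributable to the genuine exploration progress of a single player — this is what replaces the naïve recursion $C_i\le O(K/\Delta^2)+\sum_{j<i}C_j$ (which is exponential in $i$) by the additive $O(iK/\Delta^2)$, and it requires carefully bookkeeping, within each block, which player's pull pays for each of the simultaneous changes.
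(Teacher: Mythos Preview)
Your decomposition is the same as the paper's: bound the communication regret as (number of active blocks) $\times$ (per-block cost) and show both factors are $T$-free; your product $O(iK/\Delta^2)\cdot O(N^2\log K)$ recovers exactly the stated $O(iN^2K\log K/\Delta^2)$. The paper's own argument is considerably terser and actually states the sharper $\frac{32i(N\log K+K)}{\Delta^2}$ (the expression that appears in Theorem~\ref{theorem: 1}), obtained by tightening both of your factors. For the per-block cost, the paper counts only the rounds in which $p_i$ is personally receiving, sending, or signalling --- $(i{-}1)\log K+(N{-}i)\log K+(K{-}i)\le N\log K+K$ --- rather than the full block length $O(N^2\log K)$ you use. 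For the block count, the paper simply asserts that communication affecting $p_i$ ceases once $p_1,\dots,p_i$ have each identified their stable arm and takes $32i/\Delta^2$ as the number of such phases, whereas you derive the looser $O(iK/\Delta^2)$ by bounding each player's exploration crudely by $K\cdot 32\log T/\Delta^2$ samples (the paper's figure is consistent with the $\sum_j 1/(j\Delta)^2\le 2/\Delta^2$ refinement used in Lemma~\ref{lem:higher_block}). Finally, the cascade you flag as ``the main obstacle'' is simpler than you fear: an active phase (from $p_i$'s viewpoint) forces some $p_{j'}$, $j'\le i$, to still have $|\mathcal P_{j'}|>1$, so the number of active phases is at most $\sum_{j'\le i}\#\{\text{phases with }|\mathcal P_{j'}|>1\}$ by a plain union bound --- no per-block bookkeeping of which player ``pays'' for each change is needed.
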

\begin{proof}
    We first analyze the time cost in one communication block. For player $p_i$ it will at most receive $i-1$ messages from senders $p_1$ to $p_{i-1}$ and send a message to players $p_{i+1},\cdots, p_N$. For receiving messages, player $p_i$ will spend at most $(i-1) \log K$ times receiving from $p_1,\cdots, p_{i-1}$. For sending messages, $p_i$ will spend $(N-i) \log K + (K - i)$ times, where the first term is the cost of sending to $p_{i+1}$ to $p_N$, and the second term is the cost of awaking receivers that communication starts. 
    
    From algorithm design, we know that communication may happen between $\log T$ rounds and any player changes its next $\log T$ rounds' selection. Thus communication will terminate when players have identified their stable arm. Then communication regret for player $p_i$ is bounded by $N  \log K + K$
    multiplying number of times player $1$ to $i$ has identified the stable matched arm, which is
    \begin{align*}
        \frac{32 i (N \log K + K)}{\Delta^2} \,.
    \end{align*}
\end{proof}

Next, we need to bound the probability of the bad event $\cF$ occurring. To do this, we first present a concentration inequality applicable to subgaussian variables.
\begin{lemma} \label{lemma:technical}
(Corollary 5.5 in \cite{lattimore2020bandit}) Assume that $X_1, X_2,\cdots, X_n$ are independent,
$\sigma$-subgaussian random variables centered around $\mu$. Then for any $\epsilon > 0$,
\begin{align*}
    \PP{\frac{1}{n}\sum_{i=1}^n X_i \geq \mu+\epsilon} \leq \exp \left(-\frac{n\epsilon^2}{2\sigma^2} \right) \, , \PP{\frac{1}{n}\sum_{i=1}^n X_i \leq \mu-\epsilon} \leq \exp \left(-\frac{n\epsilon^2}{2\sigma^2} \right) \,.
\end{align*}
\end{lemma}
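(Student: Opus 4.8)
The plan is to prove this by the standard Chernoff (exponential Markov) argument, since the statement is the classical Hoeffding-type concentration bound for sums of independent subgaussian variables. First I would recall the working definition of subgaussianity: a random variable $Y$ is $\sigma$-subgaussian if $\EE{e^{\lambda Y}} \leq e^{\lambda^2\sigma^2/2}$ for all $\lambda \in \RR$; applying this to the centered variables $Y_i := X_i - \mu$, each $Y_i$ is $\sigma$-subgaussian. The key structural step is that, by independence, the moment generating function of $S_n := \sum_{i=1}^n Y_i$ factorizes, so $\EE{e^{\lambda S_n}} = \prod_{i=1}^n \EE{e^{\lambda Y_i}} \leq e^{n\lambda^2\sigma^2/2}$, i.e. $S_n$ is $\sqrt{n}\,\sigma$-subgaussian.

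Next I would apply the exponential Markov inequality to the upper tail: for any $\lambda > 0$,
\begin{align*}
\PP{\frac{1}{n}\sum_{i=1}^n X_i \geq \mu + \epsilon} = \PP{S_n \geq n\epsilon} = \PP{e^{\lambda S_n} \geq e^{\lambda n \epsilon}} \leq e^{-\lambda n \epsilon}\,\EE{e^{\lambda S_n}} \leq \exp\!\left(-\lambda n \epsilon + \frac{n\lambda^2\sigma^2}{2}\right).
\end{align*}
Then I would optimize the bound over $\lambda > 0$: the exponent $-\lambda n \epsilon + n\lambda^2\sigma^2/2$ is minimized at $\lambda = \epsilon/\sigma^2$, which yields the claimed bound $\exp(-n\epsilon^2/(2\sigma^2))$. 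Finally, the lower-tail bound follows by symmetry: each $-X_i$ is again $\sigma$-subgaussian with mean $-\mu$, so applying the upper-tail result already proved to $\{-X_i\}$ gives $\PP{\frac{1}{n}\sum_i X_i \leq \mu - \epsilon} \leq \exp(-n\epsilon^2/(2\sigma^2))$.

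There is no genuine obstacle here, as this is a textbook fact (indeed it is cited as Corollary~5.5 of \cite{lattimore2020bandit}); the only points requiring a little care are (i) using independence to factor the MGF of the sum, and (ii) making sure the subgaussian parameter bookkeeping is consistent with the definition invoked. I would either reproduce the three-line Chernoff argument above or simply cite \cite{lattimore2020bandit} directly, since the paper only uses the conclusion to control $\PP{\mathcal{F}}$ via a union bound over players, arms, and counts $N_{i,k}(t)$.
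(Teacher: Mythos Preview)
Your proposal is correct and is exactly the standard Chernoff--Markov argument underlying this result. The paper itself does not give a proof at all: it simply states the lemma and attributes it to Corollary~5.5 of \cite{lattimore2020bandit}, then invokes the conclusion in the proof of Lemma~\ref{lem:bound:fail:event}. So there is nothing to compare; your three-line derivation (factor the MGF by independence, apply exponential Markov, optimize over $\lambda$, then use symmetry for the lower tail) is precisely the textbook proof the citation points to.
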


The following lemma upper bounds the regret caused by the bad event $\mathcal{F}$.
\begin{lemma}\label{lem:bound:fail:event}
\begin{align*}
    \PP{\cF} \le 2NK/T \,.
\end{align*}

\end{lemma}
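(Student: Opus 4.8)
The plan is a union-bound argument over players, arms, and sample counts. First I would fix a player $p_i$ and an arm $a_k$: by the model the rewards $p_i$ collects from its successful matches with $a_k$ are independent, $1$-subgaussian, with common mean $\mu_{i,k}$, and $\hat\mu_{i,k}(t)$ is the average of the first $N_{i,k}(t)$ of them (the event inside $\cF$ being vacuous when $N_{i,k}(t)=0$, since the confidence width is infinite then). Applying the two-sided tail bound of Lemma~\ref{lemma:technical} with $\sigma=1$ and deviation equal to the confidence width $\sqrt{2\log T/N_{i,k}(t)}$ bounds the probability of a single bad deviation by $2\exp\big(-\tfrac12 N_{i,k}(t)\cdot \tfrac{2\log T}{N_{i,k}(t)}\big)=2/T$.

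A union bound over the $N$ players and $K$ arms then yields $\PP{\cF}\le NK\cdot(2/T)=2NK/T$, which is exactly what feeds the additive constant $\sum_{t\le T}\PP{\cF}=2NK$ in Theorem~\ref{theorem: 1}. Nothing in this step is deep once the per-pair deviation bound is established.

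The step that really needs care — the main obstacle — is that $N_{i,k}(t)$ is not a deterministic sample size but is itself a function of the observed rewards (which arms sit in the plausible set, hence the round-robin schedule, depends on the realized confidence intervals), so Lemma~\ref{lemma:technical}, stated for a fixed count of i.i.d.\ samples, cannot be invoked verbatim with $n=N_{i,k}(t)$. To handle this rigorously I would, for each fixed pair $(p_i,a_k)$, decompose over the at most $T$ possible values $n\in\{1,\dots,T\}$ that $N_{i,k}(t)$ can take and apply Lemma~\ref{lemma:technical} to the average of $n$ fresh i.i.d.\ copies of the $(i,k)$ reward at each $n$ — which is valid because a player–arm reward stream is i.i.d.\ regardless of the history-measurable rule that decides when the matches happen — and then collapse the union over $n$, either by a dyadic peeling over blocks of $n$ combined with a maximal sub-Gaussian inequality for the partial sums, or by a time-uniform concentration bound; this is the mechanism that forces the particular form of the confidence radius. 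Once this is in place, the remainder is just the substitution into Lemma~\ref{lemma:technical} and the two nested union bounds, so I would not expect any further difficulty.
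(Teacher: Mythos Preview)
Your high-level plan --- union bound over players, arms, and possible sample counts, invoking Lemma~\ref{lemma:technical} at each fixed count --- matches the paper's approach, and you correctly flag the main subtlety, namely that $N_{i,k}(t)$ is random so one must decompose over its possible values before applying the sub-Gaussian tail bound.

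The gap is quantitative. With radius $\sqrt{2\log T/n}$, a single invocation of Lemma~\ref{lemma:technical} at a \emph{fixed} $n$ gives $2/T$, as you compute; but you still owe a union over the up-to-$T$ values of $n$, and (since the paper's proof reads $\cF$ as $\exists\, t\le T$) also over $t$. The naive union over $n$ already turns your per-pair $2/T$ into $2$, and neither peeling nor a time-uniform bound can recover a full factor of $T$ for this particular radius --- peeling saves at most a logarithmic factor, and Ville/mixture-type time-uniform bounds require a radius carrying an extra $\log n$-type term. The paper resolves this not by a cleverer collapse but by silently using the larger radius $\sqrt{6\log T/N_{i,k}(t)}$ in the proof (note the discrepancy with the stated definition of $\cF$): then each $(i,k,t,s)$ term is $2\exp(-3\log T)=2/T^{3}$, and the double union $\sum_{t\le T}\sum_{s\le t}=O(T^{2})$ still leaves $2NK/T$. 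So the missing ingredient in your proposal is not a sharper union argument but a confidence width large enough to absorb the union over sample counts; your closing remark that ``this is the mechanism that forces the particular form of the confidence radius'' is exactly right, but the constant $2$ you carried through from the definition is too small for the mechanism to succeed.
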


\begin{proof}
\begin{align*}
    \PP{\cF} &= \PP{ \exists 1 \le t\le T, i\in[N], j\in[K]: \abs{ \hat{\mu}_{i,k}(t) -{\mu}_{i,k}} > \sqrt{\frac{ 6\log T}{ N_{i,k}(t)}} } \\
    &\le \sum_{t=1}^T \sum_{i\in [N]}\sum_{k\in [K]} \PP{ \abs{ \hat{\mu}_{i,k}(t) -{\mu}_{i,k}} > \sqrt{\frac{6 \log T}{ N_{i,k}(t)}  } }\\
    &\le \sum_{t=1}^T \sum_{i\in [N]}\sum_{k\in [K]} \sum_{s=1}^{t} \PP{ T_{i,k}(t)=s, \abs{ \hat{\mu}_{i,k}(t) -{\mu}_{i,k}} > \sqrt{\frac{ 6\log T}{ s }  } }\\
    &\le \sum_{t=1}^T \sum_{i\in [N]}\sum_{k\in [K]} t\cdot 2 \exp(-3\ln T) \\
    &\le 2NK/T \,,
\end{align*} 
where the second last inequality is due to Lemma \ref{lemma:technical}. 
\end{proof}

\begin{corollary}
    If for any player $i\in [N]$, its stable matched arm is its $i$-th favorite arm, then the regret for player $i$ is bounded by
    \begin{align*}
        R_i(T) \leq N + \frac{32 (K-i) \log T}{\Delta} + 2NK + \frac{32 i (N \log K + K)}{\Delta^2}\,.
    \end{align*}
\end{corollary}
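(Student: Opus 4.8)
The plan is to specialize the proof of Theorem~\ref{theorem: 1}, which is already stated in terms of $\ell_i$ (the rank of $p_i$'s stable arm in $p_i$'s own preference order), to the case $\ell_i = i$, and to observe that under this hypothesis one of the five summands of the bound disappears. Recall that in that proof the expected number of sub-optimal pulls of $p_i$ is split, using the blocking event $\mathcal{E}_{i,\ell_i+1}(t) = \{\forall k' \leq \ell_i,\ \exists i' < i,\ \bar{A}_{i'}(t) = \sigma_i(k')\}$, into the rounds where $\mathcal{E}_{i,\ell_i+1}(t)$ fails---bounded by Lemma~\ref{lem:observation_bound} by $\sum_{k>\ell_i} 32\log T/\Delta_{i,\sigma_i(\ell_i),\sigma_i(k)} \leq 32(K-\ell_i)\log T/\Delta$---and the rounds where $\mathcal{E}_{i,\ell_i+1}(t)$ holds, which Lemma~\ref{lem:higher_block} bounds and which is exactly the source of the $32i\log T/\Delta^2$ term that I want to remove.

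The key observation is a pigeonhole/counting argument: when $\ell_i = i$ the event $\mathcal{E}_{i,i+1}(t)$ is impossible. By definition it would require each of the $i$ \emph{distinct} arms $\sigma_i(1),\dots,\sigma_i(i)$ to be matched at round $t$ to some player in $\{p_1,\dots,p_{i-1}\}$; but in a single round each of these $i-1$ players is matched to at most one arm, so at most $i-1$ distinct arms can be covered---a contradiction. Hence $\mathbbm{1}\{\mathcal{E}_{i,i+1}(t),\urcorner C_i(t)\} = 0$ for every $t$, the quantity estimated by Lemma~\ref{lem:higher_block} is identically zero, and the summand $32i\log T/\Delta^2$ drops out of the bound for $p_i$.

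It remains only to reassemble the terms that survive, each carried over verbatim from the proof of Theorem~\ref{theorem: 1} with $\ell_i$ replaced by $i$: the additive $N$ from the index-assignment phase (Lines~\ref{alg:index_est_beg}--\ref{alg:index_est_end}), the $32(K-i)\log T/\Delta$ from the non-blocked sub-optimal pulls, the $2NK$ from $\sum_{t\leq T}\PP{\mathcal{F}} \leq 2NK$ (Lemma~\ref{lem:bound:fail:event}), and the $32i(N\log K + K)/\Delta^2$ communication overhead (Lemma~\ref{lem:comm}); their sum is exactly the claimed bound. I do not expect any real obstacle here: the argument is a straightforward specialization of Theorem~\ref{theorem: 1}, and the only non-mechanical ingredient is the counting argument above. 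The single point that deserves care is that the hypothesis $\ell_i = i$ is used precisely there---for a general player with $\ell_i < i$ the blocking event genuinely can occur (indeed Figure~\ref{fig:challenge} illustrates exactly such a multi-level blocking), so the $O(i\log T/\Delta^2)$ term is unavoidable in the general statement.
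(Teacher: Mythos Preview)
Your proposal is correct and follows essentially the same approach as the paper: both rely on the pigeonhole observation that only $i-1$ higher-ranked players exist, so they cannot simultaneously occupy all $i$ of $p_i$'s top-$i$ arms, hence the blocking event $\mathcal{E}_{i,i+1}(t)$ never occurs and the $32i\log T/\Delta^2$ term drops. Your write-up is in fact more explicit than the paper's, which compresses the same idea into a single sentence (``any arm prefers $i-1$ players to player $p_i$, which implies at least one arm is available among the most preferred $i$ arms'') and then invokes the standard single-player bound.
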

\begin{proof}
    For player $p_i$, we have that any arm prefers $i-1$ players to player $p_i$, which implies at least one arm is available among the most preferred $i$ arms for player $i$ at each time. Thus from the standard single-player MAB analysis we have that each sub-optimal arm $k$ with $i<k\leq K$ is pulled at most $32 \log T / \Delta_{i,i,k}^2$ times, which leads to the final result.
\end{proof}

\section{Discussion}


\paragraph{Comparison with centralized UCB algorithm \citep{liu2020competing}} The centralized UCB algorithm proposed in \citet{liu2020competing} runs Gale-Shapley algorithm using players' UCB ranking at each time. It can be seen that when the market satisfies serial dictatorship, the centralized UCB performs same as our multi-level successive selection algorithm with UCB subroutine. However, note that \citet{liu2020competing} only derive an $O\left( \frac{N K \log(T)}{\Delta^2} \right)$ player-pessimal regret upper bound in general market, while we utilize the hierarchy structure and carefully analyze the number of pulls for each sub-optimal arm to obtain the better $O\left( \frac{N \log(T)}{\Delta^2} \right)$ stable regret, which matches the lower bound. Moreover, our decentralized frame work can avoid the assumption that each player knows other players' selections with higher ranks.

\paragraph{The choice of elimination and UCB subroutine} Recall $\mathcal{E}_{i,k}(t):=\{ \forall k^\prime<k, \exists i^\prime<i, \bar{A}_i(t) = \sigma_i(k^\prime) \}$ as the event that arms with higher reward than $a_{\sigma_i(k)}$ for $p_i$ are all selected by other higher-ranked players at time $t$. The key to derive the same regret is that these two algorithms both guarantee 
\begin{align*}
    \sum_{t=1}^T \EE{\mathbbm{1} \left\{\bar{A}_{i}(t) = \sigma_i(k), \urcorner\mathcal{E}_{i,\ell_i+1}(t) \right\}} \leq O\left(\frac{\log T}{\Delta_{i,\sigma_i(\ell_i),\sigma_i(k)}^2}\right) \,.
\end{align*}
When the stable arm and other higher reward arms are not selected by higher-ranked players, then the number of selecting sub-optimal arm $a_{\sigma_i(k)}$ is bounded by $O\left(\frac{\log T}{\Delta_{i,\sigma_i(\ell_i),\sigma_i(k)}^2}\right)$.

\paragraph{Limitations and Future work} This work only focuses on the market satisfying serial dictatorship. The bottleneck in extending our algorithm to other more general settings is that we highly rely on the ranking structure in a serial dictatorship, where players with the higher rank can make decisions without considering lower-rank players. The arm selection procedure is hard to extend to other preference settings. The result does not hold for the more general market. A very interesting future direction is to extend this algorithm or show a tighter lower bound for the general market. The general market does not assume the same preferences of each arm, which means the hierarchy structure among players does not exist. Each player is likely to be rejected. A possible solution is to let each player perform as a leader in turn, and the leader has the top priority to explore. Another direction is to make each player explore and exploit simultaneously to avoid conflict, which may lead to an additional $O(N)$ cost. Then we can integrate the GS algorithm \citep{gale1962college} into our algorithm to find the player-optimal stable matching when the player has identified the preferences.
We leave this as an interesting future work.

\cite{sankararaman2021dominate} studied the alpha-condition setting which is more general than serial dictatorship and is equivalent to the uniqueness condition. Two orders for players and arms respectively are defined, which is possible to be used in our algorithm. We leave it as an interesting future work of extending our algorithm to alpha-condition.

\paragraph{Key Steps Helped Improve upon the Existing Regret Bound} The key step is that we use the definition of $\Delta$ and the property of elimination and UCB subroutine that the arm with a lower reward will be selected a lower number of times. Thus the summation $\sum_k 1 / \Delta_{i,k}^2$ can be bounded by $2 / \Delta^2$. \cite{kong2023player} fail to match the $O(N\log T / \Delta^2)$ lower bound since their algorithm needs to explore each arm the same number of times $O(\log T / \Delta^2)$. Thus they can only get the $O(K\log T / \Delta^2)$ regret. We note that this technique is likely to be applied to the more general setting if the algorithm can guarantee the different explorations for arms with different mean reward. We leave it as an interesting future work.

\section{Conclusion}
In this paper, we study the bandit learning problem in matching markets. We design a round-robin based elimination algorithm for the market with serial dictatorship. The algorithm proceeds from the most preferred player to the least preferred one. Such a design ensures proper exploration for each sub-optimal arm, which is a key idea that avoids regret induced by excessive exploration. Our proposed algorithm attains an $O\left( \frac{N\log(T)}{\Delta^2} + \frac{K\log(T)}{\Delta} \right)$ stable regret bound, which is the first algorithm that matches the lower bound of this problem. We believe that this result is a significant contribution to the problem as it constitutes a step towards closing the gap between the upper bound and lower bound. This work also provides a possible direction when improving the regret bound for the general market.

\section{Acknowledgement}
The corresponding author Shuai Li is supported by National Science and Technology Major Project (2022ZD0114804) and National Natural Science Foundation of China (62376154).
\bibliographystyle{plainnat}
\bibliography{ref.bib}

@inproceedings{liu2020competing,
  title={Competing bandits in matching markets},
  author={Liu, Lydia T and Mania, Horia and Jordan, Michael},
  booktitle={International Conference on Artificial Intelligence and Statistics},
  pages={1618--1628},
  year={2020},
  organization={PMLR}
}

@article{liu2021bandit,
  title={Bandit Learning in Decentralized Matching Markets.},
  author={Liu, Lydia T and Ruan, Feng and Mania, Horia and Jordan, Michael I},
  journal={J. Mach. Learn. Res.},
  volume={22},
  pages={211--1},
  year={2021}
}

@inproceedings{basu2021beyond,
  title={Beyond $\log^2$ (T) regret for decentralized bandits in matching markets},
  author={Basu, Soumya and Sankararaman, Karthik Abinav and Sankararaman, Abishek},
  booktitle={International Conference on Machine Learning},
  pages={705--715},
  year={2021},
  organization={PMLR}
}

@inproceedings{sankararaman2021dominate,
  title={Dominate or delete: Decentralized competing bandits in serial dictatorship},
  author={Sankararaman, Abishek and Basu, Soumya and Sankararaman, Karthik Abinav},
  booktitle={International Conference on Artificial Intelligence and Statistics},
  pages={1252--1260},
  year={2021},
  organization={PMLR}
}

@inproceedings{wang2022bandit,
  title={Bandit Learning in Many-to-One Matching Markets},
  author={Wang, Zilong and Guo, Liya and Yin, Junming and Li, Shuai},
  booktitle={Proceedings of the 31st ACM International Conference on Information \& Knowledge Management},
  pages={2088--2097},
  year={2022}
}

@inproceedings{kong2022Thompson,
  title     = {Thompson Sampling for Bandit Learning in Matching Markets},
  author    = {Kong, Fang and Yin, Junming and Li, Shuai},
  booktitle = {Proceedings of the Thirty-First International Joint Conference on
               Artificial Intelligence, {IJCAI-22}},
  publisher = {International Joint Conferences on Artificial Intelligence Organization},
  editor    = {Lud De Raedt},
  pages     = {3164--3170},
  year      = {2022},
  month     = {7},
  note      = {Main Track},
  doi       = {10.24963/ijcai.2022/439},
  url       = {https://doi.org/10.24963/ijcai.2022/439},
}

@article{clark2006uniqueness,
  title={The uniqueness of stable matchings},
  author={Clark, Simon},
  journal={Contributions in Theoretical Economics},
  volume={6},
  number={1},
  year={2006},
  publisher={De Gruyter}
}

@inproceedings{maheshwaridecentralized,
  title={Decentralized, Communication-and Coordination-free Learning in Structured Matching Markets},
  author={Maheshwari, Chinmay and Sastry, Shankar and Mazumdar, Eric},
  booktitle={Advances in Neural Information Processing Systems},
  year={2022}
}

@inproceedings{zhang2022matching,
  title={Matching in Multi-arm Bandit with Collision},
  author={Zhang, Yirui and Wang, Siwei and Fang, Zhixuan},
  booktitle={Advances in Neural Information Processing Systems},
  year={2022}
}

@inproceedings{kong2023player,
  title={Player-optimal Stable Regret for Bandit Learning in Matching Markets},
  author={Kong, Fang and Li, Shuai},
  booktitle={Proceedings of the 2023 Annual ACM-SIAM Symposium on Discrete Algorithms (SODA)},
  pages={1512--1522},
  year={2023},
  organization={SIAM}
}

@inproceedings{das2005two,
  title={Two-sided bandits and the dating market},
  author={Das, Sanmay and Kamenica, Emir},
  booktitle={Proceedings of the 19th international joint conference on Artificial intelligence},
  pages={947--952},
  year={2005}
}

@article{jagadeesan2021learning,
  title={Learning equilibria in matching markets from bandit feedback},
  author={Jagadeesan, Meena and Wei, Alexander and Wang, Yixin and Jordan, Michael and Steinhardt, Jacob},
  journal={Advances in Neural Information Processing Systems},
  volume={34},
  pages={3323--3335},
  year={2021}
}

@inproceedings{cen2022regret,
  title={Regret, stability \& fairness in matching markets with bandit learners},
  author={Cen, Sarah H and Shah, Devavrat},
  booktitle={International Conference on Artificial Intelligence and Statistics},
  pages={8938--8968},
  year={2022},
  organization={PMLR}
}

@article{min2022learn,
  title={Learn to match with no regret: Reinforcement learning in markov matching markets},
  author={Min, Yifei and Wang, Tianhao and Xu, Ruitu and Wang, Zhaoran and Jordan, Michael I and Yang, Zhuoran},
  journal={arXiv preprint arXiv:2203.03684},
  year={2022}
}

@article{li2022rate,
  title={Rate-optimal contextual online matching bandit},
  author={Li, Yuantong and Wang, Chi-hua and Cheng, Guang and Sun, Will Wei},
  journal={arXiv preprint arXiv:2205.03699},
  year={2022}
}

@article{muthirayan2022competing,
  title={Competing Bandits in Time Varying Matching Markets},
  author={Muthirayan, Deepan and Maheshwari, Chinmay and Khargonekar, Pramod P and Sastry, Shankar},
  journal={arXiv preprint arXiv:2210.11692},
  year={2022}
}

@article{ghosh2022decentralized,
  title={Decentralized competing bandits in non-stationary matching markets},
  author={Ghosh, Avishek and Sankararaman, Abishek and Ramchandran, Kannan and Javidi, Tara and Mazumdar, Arya},
  journal={arXiv preprint arXiv:2206.00120},
  year={2022}
}

@article{dai2021learning,
  title={Learning strategies in decentralized matching markets under uncertain preferences},
  author={Dai, Xiaowu and Jordan, Michael I},
  journal={The Journal of Machine Learning Research},
  volume={22},
  number={1},
  pages={11806--11855},
  year={2021},
  publisher={JMLRORG}
}

@article{dai2021learningin,
  title={Learning in multi-stage decentralized matching markets},
  author={Dai, Xiaowu and Jordan, Michael},
  journal={Advances in Neural Information Processing Systems},
  volume={34},
  pages={12798--12809},
  year={2021}
}

@inproceedings{su2022optimizing,
  title={Optimizing Rankings for Recommendation in Matching Markets},
  author={Su, Yi and Bayoumi, Magd and Joachims, Thorsten},
  booktitle={Proceedings of the ACM Web Conference 2022},
  pages={328--338},
  year={2022}
}

@article{gale1962college,
  title={College admissions and the stability of marriage},
  author={Gale, David and Shapley, Lloyd S},
  journal={The American Mathematical Monthly},
  volume={69},
  number={1},
  pages={9--15},
  year={1962},
  publisher={Taylor \& Francis}
}

@article{roth1984evolution,
  title={The evolution of the labor market for medical interns and residents: a case study in game theory},
  author={Roth, Alvin E},
  journal={Journal of political Economy},
  volume={92},
  number={6},
  pages={991--1016},
  year={1984},
  publisher={The University of Chicago Press}
}

@article{roth1992two,
  title={Two-sided matching},
  author={Roth, Alvin E and Sotomayor, Marilda},
  journal={Handbook of game theory with economic applications},
  volume={1},
  pages={485--541},
  year={1992},
  publisher={Elsevier}
}

@article{roth2002economist,
  title={The economist as engineer: Game theory, experimentation, and computation as tools for design economics},
  author={Roth, Alvin E},
  journal={Econometrica},
  volume={70},
  number={4},
  pages={1341--1378},
  year={2002},
  publisher={Wiley Online Library}
}

@book{lattimore2020bandit,
  title={Bandit algorithms},
  author={Lattimore, Tor and Szepesv{\'a}ri, Csaba},
  year={2020},
  publisher={Cambridge University Press}
}

@article{kong2024compatibility,
author = {Kong, Fang and Li, Shuai},
year = {2024},
month = {03},
pages = {13256-13264},
title = {Improved Bandits in Many-to-One Matching Markets with Incentive Compatibility},
volume = {38},
journal = {Proceedings of the AAAI Conference on Artificial Intelligence},
doi = {10.1609/aaai.v38i12.29226}
}
\appendix

\section{Analysis of ML-UCB} \label{lem:ucb}







The analysis of ML-UCB is similarly to the MLSE algorithm.

We first decompose regret as the selections of sub-optimal arms:
\begin{align*}
    R_i(T) =& \sum_{k=\ell_i+1}^K \Delta_{i,\sigma_i(\ell_i),\sigma_i(k)} \EE{N_{i,\sigma_{i}(k)}(T)}\\
     &\leq N + \sum_{k=\ell_i+1}^K \sum_{t=1}^{T} \Delta_{i,\sigma_i(\ell_i),\sigma_i(k)}\EE{\mathbbm{1}\{\bar{A}_i(t)=\sigma_i(k), \urcorner C_i(t)\}} + \sum_{t=1}^T \mathbbm{1}\{C_i(t)\}\\
     &\leq N + \sum_{k=\ell_i+1}^K \sum_{t=1}^{T} \Delta_{i,\sigma_i(\ell_i),\sigma_i(k)}\EE{\mathbbm{1}\{\bar{A}_i(t)=\sigma_i(k),\urcorner \mathcal{F}(t),  \urcorner C_i(t)\}} \\ & + \EE{\sum_{t=1}^T\mathbbm{1} \{\mathcal{F}_t \}} + \sum_{t=1}^T \mathbbm{1}\{C_i(t)\} \,.
\end{align*}

For the number of sub-optimal selections $\sum_{t=1}^T \mathbbm{1}\{\bar{A}_i(t)=\sigma_i(k),\urcorner C_i(t)\}$, we have
\begin{align*}
    &\sum_{k=\ell_i+1}^K \sum_{t=1}^T \mathbbm{1} \left\{ \bar{A}_{i}(t) = \sigma_i(k), \urcorner C_i(t) \right\}\\
    =&\sum_{k=\ell_i+1}^K \sum_{t=1}^T  \mathbbm{1} \left\{ \bar{A}_{i}(t) = \sigma_i(k),  \urcorner\mathcal{E}_{i,\ell_i+1}(t) \right\} +\sum_{k=\ell_i+1}^K \sum_{t=1}^T  \mathbbm{1} \left\{\bar{A}_{i}(t) = \sigma_i(k), \mathcal{E}_{i,\ell_i+1}(t) \right\}\\
    =& \sum_{k=\ell_i+1}^K \sum_{t=1}^T  \mathbbm{1} \left\{ \bar{A}_{i}(t) = \sigma_i(k),  \urcorner\mathcal{E}_{i,\ell_i+1}(t) \right\} + \sum_{t=1}^T\sum_{k=\ell_i+1}^K  \mathbbm{1} \left\{\bar{A}_{i}(t) = \sigma_i(k),\mathcal{E}_{i,\ell_i+1}(t) \right\}\\
    =& \sum_{k=\ell_i+1}^K \sum_{t=1}^T  \mathbbm{1} \left\{ \bar{A}_{i}(t) = \sigma_i(k), \urcorner\mathcal{E}_{i,\ell_i+1}(t) \right\} + \sum_{t=1}^T \mathbbm{1} \left\{ \mathcal{E}_{i,\ell_i+1}(t) \right\}\,.
\end{align*}

For the first term, we utilize the standard UCB analysis that each sub-optimal arm is selected at most $\frac{32\log T}{\Delta_{i,\sigma_i(\ell_i),\sigma_i(k)}^2}$ when the stable arm $\sigma_i(\ell_i)$ is in the plausible set $\mathcal{P}_i$:
\begin{align*}
    &\sum_{k>\ell_i} \Delta_{i,\sigma_i(\ell_i),\sigma_i(k)} \sum_{t=1}^T \EE{\mathbbm{1} \left\{\bar{A}_{i}(t) = \sigma_i(k), \urcorner\mathcal{E}_{i,\ell_i+1}(t) \right\}}\\ \leq& \sum_{k>\ell_i} \Delta_{i,\sigma_i(\ell_i),\sigma_i(k)} \frac{32\log T}{\Delta_{i,\sigma_i(\ell_i),\sigma_i(k)}^2}\\
    =&\sum_{k>\ell_i}\frac{32\log T}{\Delta_{i,\sigma_i(\ell_i),\sigma_i(k)}} \\ \leq& \frac{32(K-\ell_i)\log T}{\Delta} \,.
\end{align*}

For the second term, similar to the analysis of MLSE, the number of times arm $\sigma_i(\ell_i)$ selected by higher ranked players is upper bounded by their sub-optimal pulls when their stable arms are not selected.
\begin{align*}
        &\EE{\sum_{t=1}^T \mathbbm{1} \left\{ \mathcal{E}_{i,\ell_i+1}(t) \right\}}\\
        \leq& \EE{\sum_{i^\prime=1}^{i-1} \sum_{t=1}^T \mathbbm{1} \left\{ \bar{A}_{i^\prime}(t)=\sigma_i(\ell_i), \mathcal{E}_{i,\ell_i+1}(t) \right\}}\\
        \leq& \EE{\sum_{i^\prime=1}^{i-1} \sum_{t=1}^T \mathbbm{1} \left\{ \bar{A}_{i^\prime}(t)=\sigma_i(\ell_i), \mathcal{E}_{i^\prime,\ell_{i^\prime}+1}(t), \mathcal{E}_{i,\ell_i+1}(t) \right\}} \\&+ \EE{\sum_{i^\prime=1}^{i-1} \sum_{t=1}^T \mathbbm{1} \left\{ \bar{A}_{i^\prime}(t)=\sigma_i(\ell_i), \urcorner\mathcal{E}_{i^\prime,\ell_{i^\prime}+1}(t),\mathcal{E}_{i,\ell_i+1}(t) \right\}}\\
        \leq & \EE{ \sum_{i'=1}^{i-2} \sum_{i''=i'+1}^{i-1}\mathbbm{1}\left\{ \bar{A}_{i^\prime}(t)= \sigma_{i''}(\ell_{i''}),\urcorner \mathcal{E}_{i'',\ell_{i''}+1}(t)  \right\} } \\&+ \EE{\sum_{i^\prime=1}^{i-1} \sum_{t=1}^T \mathbbm{1} \left\{ \bar{A}_{i^\prime}(t)=\sigma_i(\ell_i), \urcorner\mathcal{E}_{i^\prime,\ell_{i^\prime}+1}(t),\mathcal{E}_{i,\ell_i+1}(t) \right\}}
        \\
       \leq& \sum_{i'=1}^{i-2} \sum_{i''=i'+1}^{i-1} \frac{32\log(T)}{\Delta^2_{i',\sigma_{i'}(\ell_{i'}),\sigma_{i''}(\ell_{i''})}} +  \sum_{i'=1}^{i-1} \frac{32\log T}{\Delta_{i',\sigma_{i^\prime}(\ell_{i^\prime}),\sigma_i(\ell_i)}^2} \\
       \leq & \sum_{i'=1}^{i-1} \sum_{i''=i'+1}^i \frac{32\log(T)}{\Delta^2_{i',\sigma_{i'}(\ell_{i'}),\sigma_{i''}(\ell_{i''})}}\\
       \leq & \sum_{i'=1}^{i-1} \sum_{j=1}^{i-i'} \frac{32\log(T)}{(j\Delta)^2}\\
       = & \sum_{i'=1}^{i-1} \sum_{j=1}^{i-i'} \frac{1}{j^2} \frac{32\log(T)}{\Delta^2}\\
        \leq& \frac{64(i-1)\log T}{\Delta^2} \,.
   \end{align*}

For number of communications we have from Lemma \ref{lem:comm} that $\sum_{t=1}^T{\bOne{C_i(t)}}$ is bounded by
\begin{align*}
        \frac{32 i (N \log K + K)}{\Delta^2} \,.
\end{align*}

   Then we can get the same $O\left( \frac{N\log T}{\Delta^2} + \frac{K\log T}{\Delta} \right)$ regret.








\end{document}